\documentclass[journal]{IEEEtran}

\ifCLASSINFOpdf
\usepackage[square,sort,comma,numbers]{natbib}
\usepackage{amsmath}
\usepackage{times}
\usepackage{graphicx}
\usepackage{color}
\usepackage{multirow}
\usepackage{rotating}
\usepackage{bbm}
\usepackage{latexsym}

\usepackage{varwidth,xcolor}
\usepackage{tabularx}
\usepackage{amsthm}
\usepackage{amssymb}
\usepackage{graphicx}
\usepackage{algorithm}
\usepackage{algorithmicx}
\usepackage{algpseudocode}
\usepackage{hyperref}
\usepackage{url}
\usepackage{times}
\usepackage{multirow}
\usepackage{graphicx}
\usepackage{subfigure}
\usepackage{amsmath}
\usepackage{amsfonts}
\usepackage{amssymb}
\usepackage{times}
\usepackage{bm}
\usepackage{color}
\usepackage{amsmath}
\usepackage{amsfonts}
\usepackage{amssymb}
\usepackage{stmaryrd}
\usepackage{multirow}
\usepackage{soul}
\usepackage{rotating}
\usepackage{pdflscape}

\def\bx{{\mathbf x}}

\def\E{{\mathbb E}}

\def\etal{{\em et al.\/}\,}

\def\0{{\bf 0}}

\def\E{{\mathbb E}}
\newtheorem{remark}{Remark}
\newtheorem{deftn}{Definition}
\newtheorem{theorem}{Theorem}
\newtheorem{proposition}[theorem]{Proposition}
\newtheorem{lemma}[theorem]{Lemma}

\hyphenation{op-tical net-works semi-conduc-tor}

\begin{document}
%
%
%
%
%
\title{$N$-ary Error Correcting Coding Scheme}
\author{Joey~Tianyi~Zhou,~
        Ivor~W.~Tsang,~
        Shen-Shyang~Ho~
        and~Klaus-Robert~M$\ddot{\textrm{u}}$ller,~
\thanks{J.~T. Zhou is with the Institute of High Performance Computing, Singapore,}
\thanks{I.~W. Tsang is with the University of Sydney, Australia,}
\thanks{S.-S. Ho is with Nanyang Technological University, Singapore,}
\thanks{K.-R.~M$\ddot{\textrm{u}}$ller is with the Department of Machine Learning,
Berlin Institute of Technology, Berlin, Germany, and also with the Department
of Brain and Cognitive Engineering, Korea University, Seoul, Korea.}
}

\markboth{Submitted to Transaction on Information Theory }%
{Shell \MakeLowercase{\textit{et al.}}: Bare Demo of IEEEtran.cls for IEEE Journals}

\maketitle
\begin{abstract}
The coding matrix design plays a fundamental role in the prediction performance of the error correcting output codes (ECOC)-based multi-class task. {In many-class classification problems, e.g., fine-grained categorization, it is difficult to distinguish subtle between-class differences under existing coding schemes due to a limited choices of coding values.}   In this paper, we investigate whether one can relax existing binary and ternary code design to $N$-ary code design to achieve better classification performance. {In particular, we present a novel $N$-ary coding scheme that decomposes the original multi-class problem into simpler multi-class subproblems, which is similar to applying a divide-and-conquer method.} The two main advantages of such a coding scheme are as follows: (i) the ability to construct more discriminative codes and (ii) the flexibility for the user to select the best $N$ for ECOC-based classification. We show empirically that the optimal $N$ (based on classification performance) lies in $[3, 10]$ with some trade-off in computational cost. Moreover, we provide theoretical insights on the dependency of the generalization error bound of an $N$-ary ECOC on the average base classifier generalization error and the minimum distance between any two codes constructed. Extensive experimental results on  benchmark multi-class datasets show that the proposed coding scheme achieves superior prediction performance over the state-of-the-art coding methods.
\end{abstract}
\begin{IEEEkeywords}
Multi-class Classification, Coding Scheme, Error Correcting Output Codes
\end{IEEEkeywords}

\section{Introduction}
Many real-world problems are multi-class in nature.
To handle multi-class problems,  many  approaches have been proposed. One research direction focuses on solving  multi-class
problems directly. These approaches include  decision tree based methods
\cite{Quinlan:1986,CPT_UAI2009,Su:2006:FDT:1597538.1597619,BengioWG_NIPS2010,conf/uai/YangT11,GaoK_relaxed_hierarchy_ICCV2011,DengSBL_NIPS2011}. In particular,
decision-tree based algorithms label each leaf of the decision tree
with one of the $N_C$ classes, and internal nodes can be selected to
discriminate between these classes. The performance of decision-tree based algorithms heavily depends on the internal tree structure. Thus, these methods are usually vulnerable
to outliers. To achieve better generalization, \cite{conf/uai/YangT11,GaoK_relaxed_hierarchy_ICCV2011} propose to learn  the decision tree structure based on the large margin criterion. However,
these algorithms usually involve solving sophisticated optimization problems and their training time increases dramatically with the increase of the number of classes.  Contrary to these complicated methods,  K-Nearest Neighbour (KNN) \cite{1053964} is a simple but effective and stable approach to handle multi-class problems. However, KNN is sensitive to noise features and can therefore suffer from the curse-of-dimensionality.   Meanwhile,
Crammer \etal \cite{Crammer:2002:AIM:944790.944813,10.1371/journal.pone.0042947} propose a direct approach for
learning multi-class support vector machines (M-SVM) by deriving the
generalized notion of margins as well as separating hyperplanes.

Another research direction focuses on the error correcting output codes (ECOC) framework that decomposes a multi-class problem into multiple binary problems so that one reuses the well-studied binary classification algorithms for their simplicity and efficiency.
Many ECOC approaches~\cite{Dietterich91error-correctingoutput,liu2013learning,Rocha-TNNLS-2014,conf/cvpr/ZhaoX13,montazer2012error,ubeyli2007675,Garcia-Pedrajas:2011} have been proposed in recent years to design a good coding matrix.
They fall into  two categories: problem-independent and problem-dependent.
The challenge with problem-independent codings, such as  random
 ECOC \cite{4668347}, is that they are not designed and optimized for a particular dataset. In fact, there is little guarantee that the created base codes are always discriminative for the multi-class classification task. Therefore, they usually require a large number of base classifiers generated by the
pre-designed coding matrix \cite{Allwein:2001:RMB:944733.944737}.
To overcome this weakness, problem-dependent methods such as discriminant
ECOC (DECOC) \cite{Pujol06discriminantecoc:} and node embedding ECOC (ECOCONE) \cite{conf/icpr/EscaleraP06} are proposed.
Recently, subspace approaches such as  subspace ECOC
\cite{Bagheri2013176} and adaptive ECOC \cite{ZhongIJCAI13} are proposed to further improve the ECOC classification framework. Though all the above-mentioned variations of the ECOC approach endeavor to enhance the ECOC paradigm for classification tasks, their designs are confined to binary $\{-1,1\}$ and ternary codes $\{-1,0,1\}$.  Such a code design constraint poses limitations on the error correcting capability of ECOC that relies on the minimum  distance, $\Delta_{\min}(M)$, between any distinct pair of rows in the coding matrix $M$. A larger $\Delta_{\min}(M)$ is more likely to rectify the errors committed by individual base classifiers \cite{Allwein:2001:RMB:944733.944737}.

{However, in the more challenging real-world applications, there exists multi-class problems where some of the classes are very similar and difficult to differentiate with each other. For example, in the fine-grained classification \cite{yao2011combining} unlike basic-level recognition, even humans
might have difficulty with some of the fine-grained categorization. One major challenge in fine-grained image classification
is to distinguish subtle between-class differences while
each class often has large within-class variation in the image
level \cite{wang2014object}. The existing binary ECOC codes cannot handle this challenge due to limited choices of coding values. It is highly possible that some classes out of multi-class classification problems are assigned with same or similar codes.  } To address this issue,
we investigate whether one can extend the existing binary or ternary coding scheme to  an \emph{$N$-ary coding} scheme to (i) allow users the flexibility to choose $N$ to construct the codes in order to (ii) improve the ECOC classification performance for a given dataset. The main contributions of this paper are as follows.
\begin{itemize}
\item We propose a novel $N$-ary coding scheme that achieves a large expected distance between any pair of rows in $M$ at a reasonable $N (> 3)$ for a multi-class problem (see Section \ref{sec:necoc}).
{The main idea of our coding scheme is to decompose the original multi-class problem into a series of smaller multi-class subproblems instead of binary classification problems. Suppose that a metaclass is a subset of classes. Also, suppose that all classes are in a one large
metaclass. So, in each level, there is a classifier to divide a metaclass into two smaller metaclasses.  This coding scheme is like a divide-and-conquer method.}
The two main advantages of such a coding scheme are as follows: (i) the ability to construct more discriminative codes and (ii) the flexibility for the user to select the best $N$ for ECOC-based classification.

\item  We provide theoretical insights on the dependency of the generalization error bound of a $N$-ary ECOC on the average base classifier generalization error and the minimum distance between any two constructed codes (see Section \ref{sec:Bound}). Furthermore, we conduct a series of empirical analyses to verify the validity of the theorem on the ECOC error bound (see section \ref{sec:experiments}).

\item We show empirically that the optimal $N$ (based on classification performance) lies in $[3, 10]$ with a slight trade-off in computational cost (see Section \ref{sec:experiments}).

\item We show empiricially the superiority of the proposed coding scheme  over the state-of-the-art coding methods for multi-class prediction tasks on a set of benchmark datasets (see Section \ref{sec:experiments}).

\end{itemize}
To the best of our knowledge, there is no previous work that attempts to extend and generalize the coding scheme to $N$-ary codes with $N> 3$.

The rest of this paper is organized as follows. Section \ref{sec:related_work} reviews the related work. Section \ref{sec:necoc} presents the generalization from binary coding to $N$-ary coding. In Section \ref{sec:complexity},we give the complexity analysis of $N$-ary coding and compare it with other coding schemes with the SVM classifier as a showcase.   Section \ref{sec:Bound} gives the error bound analysis of $N$-ary coding. Finally, Section \ref{sec:experiments} discusses our empirical studies and Section \ref{sec:conclude} concludes this work.

\begin{figure*}[htp!]
\centering
\subfigure[  Multi-class Data] {\label{fig:intro:data}\includegraphics[trim = 6.5cm 5.5cm
6.1cm 5.5cm, clip, width=0.25\textwidth]{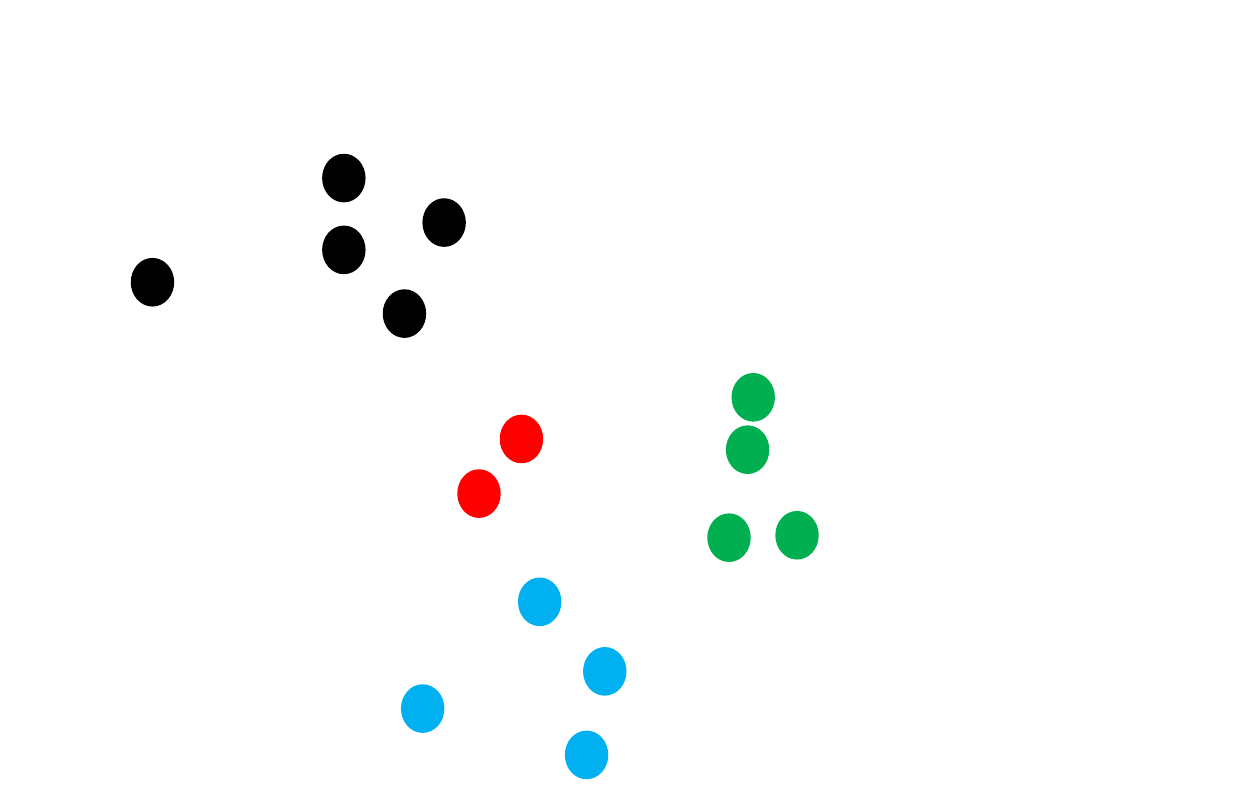}}
\subfigure[ Binary code (e.g., OVA)] {\label{fig:intro:binary}\includegraphics[trim = 6.5cm 5.5cm
6.1cm 5.5cm, clip, width=0.25\textwidth]{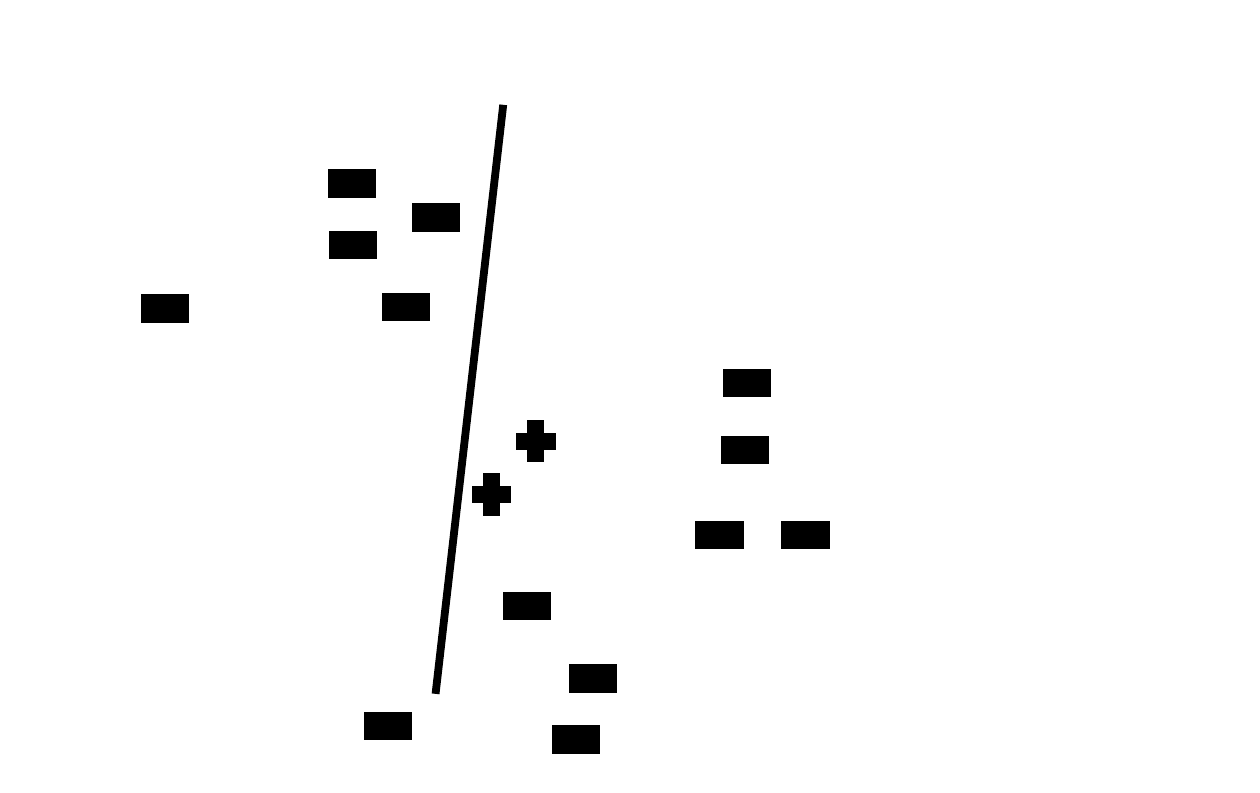}}
 \subfigure[ Ternary code (e.g., OVO)]{\label{fig:intro:ternary}\includegraphics[trim = 6.5cm 5.5cm
6.1cm 5.5cm, clip, width=0.25\textwidth]{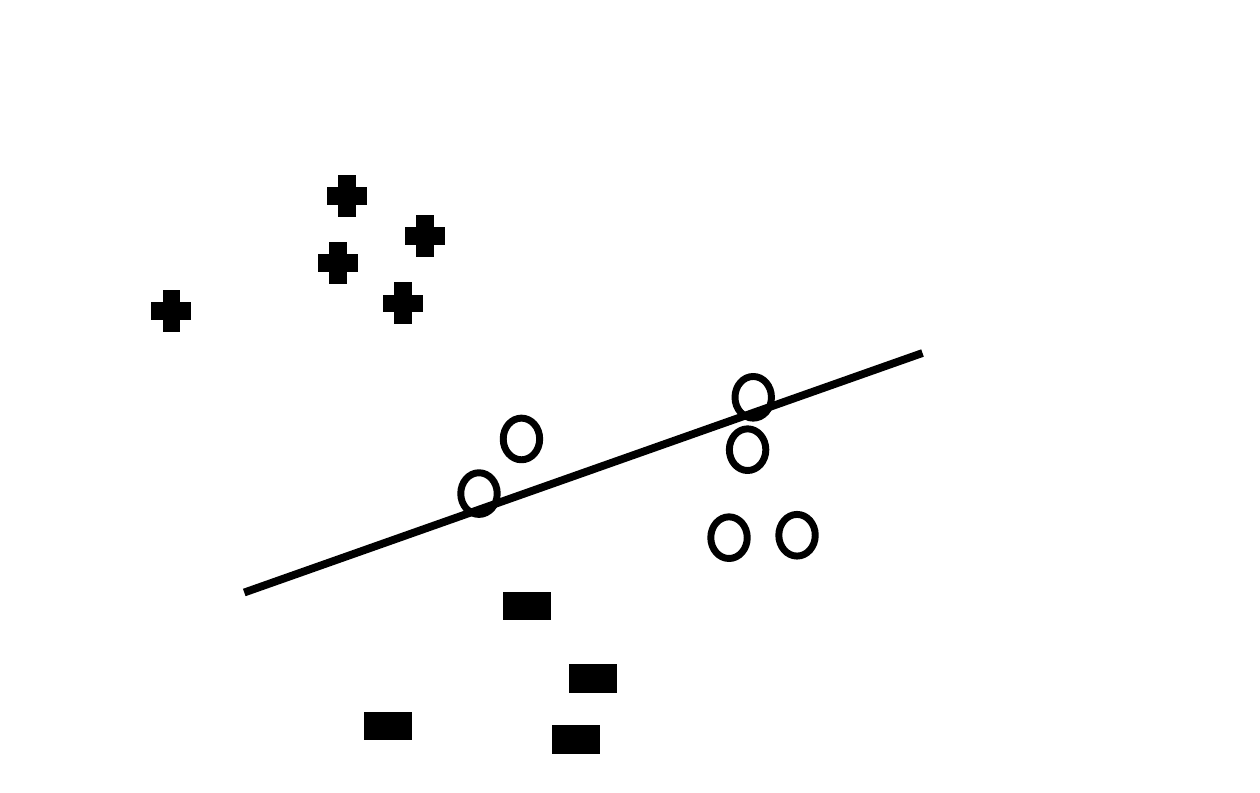}}
 \subfigure[$N$-ary Coding]{\label{fig:intro:nary}\includegraphics[trim = 6.5cm 5.5cm
6.1cm 5.5cm, clip, width=0.25\textwidth]{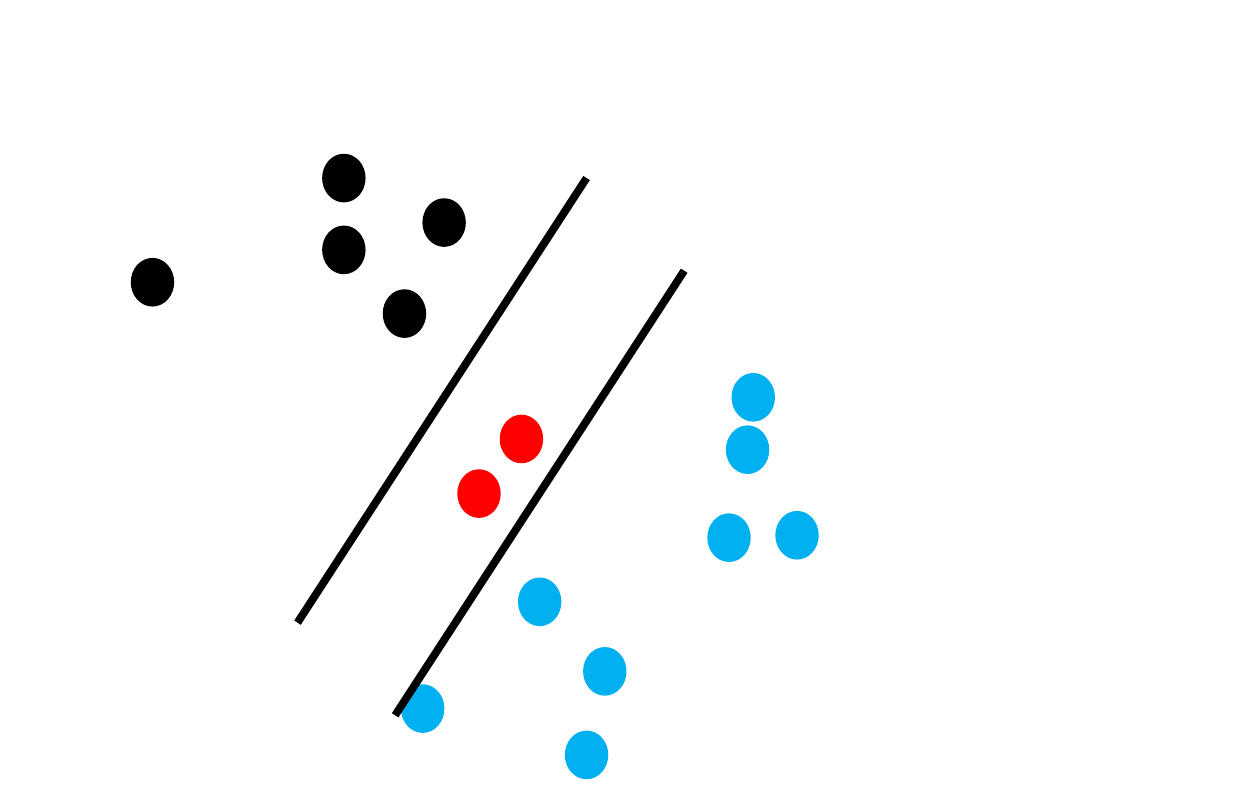}}
\caption{Fail cases of existing coding scheme: The binary coding scheme sometimes creates non-separate binary classification problems as shown in Figure \ref{fig:intro:binary}.
 The ternary coding scheme sometimes  creates cases where the data from the same class is assigned to different classes as shown in Figure \ref{fig:intro:binary}. $N$-ary coding scheme decomposes the
 difficult task into some smaller and easier tasks.   } \label{fig:intro}
\end{figure*}

\section{Related Work}\label{sec:related_work}
Many ECOC approaches~\cite{Allwein:2001:RMB:944733.944737,Pujol06discriminantecoc:,4668347,Bagheri2013176} have been proposed to design a good coding matrix in recent years.
Most of them  fall into the following two categories.
The first one is problem-independent coding, such as OVO, OVA, random
dense ECOC, and random sparse ECOC \cite{4668347}.  However, the coding matrix design is not optimized for the training dataset or the instance labels. Therefore, these
approaches usually require a large number of base classifiers generated by the
pre-designed coding matrix.
For example, the random dense
ECOC coding approach aims to construct the ECOC matrix $M \in
\{-1,1\}^{N_C\times N_L}$ where $N_C$ is the number of classes, $N_L$ is the code length,
and its elements are randomly chosen as either 1 or -1 \cite{Dietterich95solvingmulticlass}.
\cite{Allwein:2001:RMB:944733.944737} extends this binary
coding scheme to ternary coding by using a coding
matrix $M \in \{-1,0,1\}^{N_C\times N_L}$ where the classes
corresponding to 0 are not considered in the learning process.
 Allwein \etal \cite{Allwein:2001:RMB:944733.944737} suggest that dense and sparse random ECOC approaches require only $10\log_2(N_C)$ and $15\log_2(N_C)$ base
classifiers, respectively,  to achieve optimal results.
However, a random ECOC coding approach cannot  guarantee that the created base codes are always discriminative for the multi-class classification task.
Therefore, it is possible that either  some base classifiers that are redundant   for the prediction exist or  badly designed base classifiers are constructed.

To overcome this problem, some problem-dependent methods have been
proposed. In particular, the coding matrix is learned by taking the instances
as well as labels into consideration. For instance, discriminant
ECOC (DECOC) \cite{Pujol06discriminantecoc:} embeds a binary
decision tree into the ternary codes. Its key idea is to find the most
discriminative hierarchical partition of the classes which
maximizes the quadratic mutual information between the data subsets
and the class labels created for each subset. As a result, DECOC
needs  exactly $N_C-1$ base classifiers which significantly
accelerate the testing process  without
sacrificing  performance.  
However,  this decision tree based method has one major drawback: if the
parent node misclassifies an instance, the mistake will be
propagated to all the subsequent child nodes.

To address this weakness, Escalera \etal \cite{conf/icpr/EscaleraP06} proposed to optimize  node embedding for ECOC, called ECOCONE. For this approach, one initializes
a problem-independent ECOC matrix (usually OVA) and
 iteratively adds the base classifiers that discriminate the most confusing pairs of
classes into the previous ECOC ensemble to improve  performance.
However, ECOCONE suffers from three major limitations. Firstly, its
performance relies on the initial coding matrix. If the initial
coding matrix fails to perform well, the final results of ECOCONE
are usually unsatisfactory. Secondly,  its improvement is usually hindered
if it fails to discriminate the most confusing pairs. Lastly, similar to
DECOC, the training process is also time-consuming.

In addition to the above problem-dependent ECOC methods, Rocha \etal \cite{Rocha-TNNLS-2014} considers the correlation and joint probability of base binary classifiers  to reduce the number of base classifiers without sacrificing accuracy in the ECOC code design.
More recently, Zhao \etal \cite{conf/cvpr/ZhaoX13} proposed to impose the sparsity criterion into output code learning. It is shown to have much better performance and scalability to large scale classification problems compared to traditional methods like OVA. However, it involves a series of  complex optimizations to solve the proposed model with integer constraints in learning the ECOC coding matrix.

Different from the aforementioned methods, some
subspace approaches have been developed. For example,  subspace ECOC
\cite{Bagheri2013176}  is  based on using different feature subsets
for learning each base classifier to improve independence among
classifiers.  Adaptive ECOC \cite{ZhongIJCAI13} reformulates the
ECOC models into multi-task learning where the subspace
for data and base classifiers are learned.

Due to the favorable properties and promising performance of ECOC approaches for the classification task, they have been applied to real-world classification applications such as face verification \cite{journals/ivc/KittlerGWM03}, ECG beats classification \cite{ubeyli2007675}, and even beyond  multi-class problems, such as feature extraction \cite{DBLP:journals/pr/ZhongL13} and fast similarity search \cite{Yu:2010:EOH:1937728.1937730}.

Though all the above-mentioned variations of the ECOC approach endeavor to enhance the ECOC paradigm for the classification task, their designs are still based on either binary or ternary codes which lack some desirable properties available in their generalized form.

\section{From A Binary to $N$-ary Coding Matrix}\label{sec:necoc}\vspace{-1 mm}
In this section, we discuss necessities and advantages of $N$-ary coding scheme from aspects of \textbf{column correlation  of coding matrix} and \textbf{separation between codewords of different classes}.

Existing ECOC algorithms  constrain the coding values either in $\{-1,1\}$ or $\{-1,0,1\}$.
 A lot of studies show that when there are sufficient classifiers,  ECOC can reach stable and reasonable performance
  \cite{Rocha-TNNLS-2014,Dietterich95solvingmulticlass}. Nevertheless, binary and ternary codes can generate
   at most $2^{N_C}$ and $3^{N_C}$ binary classifiers, where $N_C$ denotes the number of classes.    On the other hand,  due to  limited choices of coding values, existing codes tend to create correlated and redundant classifiers and make them less effective ``voters".
    Moreover, some studies show that  binary and ternary codes usually require only $10\log_2(N_C)$ and $15\log_2(N_C)$ base
classifiers, respectively,  to achieve optimal results \cite{Allwein:2001:RMB:944733.944737,4668347}.  {Furthermore, when the original multi-class problem is difficult, the
existing coding schemes cannot handle well. For example, as shown in Figure \ref{fig:intro:binary}, the binary codes like OVA may create difficult base binary classification tasks. Ternary codes
may cause cases where the test data from the same class is assigned to different classes.}
\begin{table}[h]
\caption{\label{tab:showcase_ECOC_matrix}An example of $N$-ary coding matrix $M$ with $N=4$ and $N_L = 6$.}
 {\small
\begin{tabular}{|c||c|c|c|c|c|c|}
  \hline
        &$M_1$& $M_2$&$M_3$&$M_4$&$M_5$&$M_6$\\\hline
  $C_1$ & 1  & 1  & 2  & 4  & 1  & 1  \\
  $C_2$ & 2  & 1  & 1  & 3  & 2  & 1  \\
  $C_3$ & 3  & 2  & 1  & 2  & 3  & 1  \\
  $C_4$ & 4  & 3  & 1  & 1  & 4  & 2  \\
  $C_5$ & 4  & 3  & 2  & 2  & 4  & 3  \\
  $C_6$ & 4  & 3  & 3  & 3  & 3  & 4  \\
  $C_7$ & 3  & 4  & 4  & 4  & 2  & 4  \\
  \hline
\end{tabular}}
\centering 
\end{table}

To address these issues, we extend the binary or ternary codes to $N$-ary codes.
One example of the $N$-ary coding matrix to represent seven classes is shown in Table~\ref{tab:showcase_ECOC_matrix}. Unlike the existing ECOC framework, a row of coding matrix $M$ represents the code of each class and the code consists of $N_L$ numbers in  $\{1\cdots N\}$, where $N>3$;
while a column $M_s$ of $M$ represents the $N$ partitions of classes to be considered. To be specific, the $N$-ary ECOC approach consists of four main steps:
\begin{enumerate}
\item Generate an $N$-ary matrix $M$ by uniformly random sampling from a range $\{1.. N\}$ (e.g., Table~\ref{tab:showcase_ECOC_matrix}).
 \item For each of the $N_L$ matrix columns,  partition original training data into $N$ groups based on the new class assignments and build an $N$-class classifier.
  \item  Given a test example $\bx_t$, use the $N_L$ classifiers to output $N_L$ predicted labels for the testing output code (e.g., $f(\bx_t) = [4, 3, 1, 2, 4, 2]$).
  \item  Final label prediction $y_t$ for $\bx_t$ is the nearest class based on minimum  distance between the training and the testing output codes (e.g., $y_t = \arg\min_i d(f(\bx_t),C_i) = 4$ ).
\end{enumerate}

 One notes that $N$-ary ECOC randomly breaks a large multi-class problem into a number of smaller multi-class subproblems. These subproblems are more complicated than binary problems and they incur additional computational cost. Hence, there is a trade-off between error correcting capability and computational cost.\footnote{More complexity analyses can be found from Section \ref{sec:complexity}.}  Fortunately, our empirical studies  indicate that $N$ does not need to be too large to achieve good classification performance.
\subsection{Column Correlations   of Coding Matrix}\vspace{-1mm}
\begin{figure}[ht!]
\centering
\includegraphics[trim = 0.5cm 3.5cm
1.1cm 6.5cm, clip,  width = 0.75\columnwidth 
]{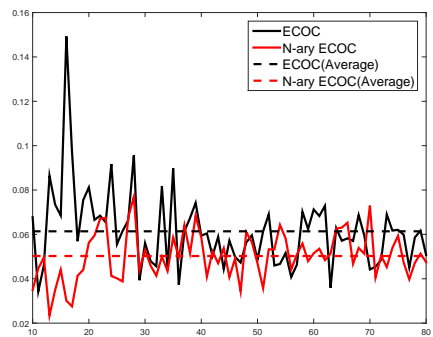}\vspace{-1 cm}
\caption{Column Correlation Comparison ( PCC v.s. $N_L$ )}\label{fig:column_corr}\vspace{-3 mm}
\end{figure}
In traditional ECOC, it suggests longer codes, i.e., $N_L$ is larger, however more binary base classifiers are likely
to be more correlated. Thus, more base classifiers created by binary or ternary codes are not effective
for final multi-class classification.
To illustrate the advantage of $N$-ary coding scheme in creating uncorrelated codes for base classifications, we conduct an experiment
 to investigate the column  correlations   of matrix $M$.  The results are shown in the Figure \ref{fig:column_corr}. In the experiment,
 we set $N_C = 20, N = 5,$ and $N_L$ varies in $[10,80]$, and use Pearson's correlation (PCC) which is a normalized correlation measure
 that eliminates the scaling effect of the codes. From  Figure \ref{fig:column_corr}, we observe that $N$-ary coding scheme achieves lower correlations for columns of coding matrix compared to conventional ternary ECOC.   Especially, when the number of tasks is small, the correlations over the created tasks for ECOC is higher than that of the $N$-ary ECOC. Therefore, an $N$-ary coding scheme not only provides  more flexibility in creating a coding matrix, but also generates codes  that are less correlated and less redundant, compared to traditional ECOC coding schemes.

\subsection{Separation Between Codewords of Different Classes}\vspace{-1 mm}
Apart from the column correlation, the row separation is another important measure to evaluate the error correcting ability of the coding matrix $M$ \cite{Dietterich95solvingmulticlass,Allwein:2001:RMB:944733.944737}.  The codes for different classes are expected to be as dissimilar as possible. If codes (rows) for different classes are similar, it is easier to commit errors.  Thus, the capability of error correction relies on the minimum distance, $\Delta_{\min}(M)$ or expected $\Delta(M)$  for any distinct pair of rows in the coding matrix $M \in
\{-1, 0, 1\}^{N_C\times N_L}$ where $N_C$ is the number of classes, and $N_L$ is the code length.   Both the absolute distance and the Hamming distance can
serve as the measure of row separation. The key difference between these two distances is that Hamming distance measures a scale-invariant difference. Specifically, the Hamming distance only cares about the number of different elements. It ignores the scale of the difference.

\noindent\textbf{Hamming Distance:}
One can use the \textit{{generalized Hamming distance}}  to calculate the $\Delta^{Ham}(M)$ for the existing coding schemes, which is defined as follows,
\begin{deftn}[Generalized Hamming Distance]\label{dfn:Ham} Let $M(r_1,:), M(r_2,:)$ denote row $r_1, r_2$ coding vectors in coding matrix $M$ with length $N_L$, respectively. Then the generalized hamming distance can be expressed as
\small
\begin{eqnarray}
\!\!\!\!&&\Delta^{Ham}(M(r_1,:),M(r_2,:)) = \nonumber\\
\!\!\!\!&& \!\!\!\!\sum_{s=1}^{N_L} \left\{\begin{array}{cl}
                                                    \!\!\!\!\!\!\!\!\!0\!\! &\!\!\textrm{if}~ M(r_1,s)\!=\!M(r_2,s)\wedge M(r_1,s)\neq 0\wedge M(r_2,s)\neq 0 \\
                                                      \!\!\!\!\!\!\!\!\!1\!\!& \!\!\textrm{if}~ M(r_1,s)\!\neq\! M(r_2,s)\wedge M(r_1,s)\neq 0\wedge M(r_2,s)\neq 0\\
                                                        \! \!\!\!0.5& \!\!\textrm{if} ~ M(r_1,s)\!=\! 0 \vee  M(r_2,s)\!=\! 0.\\
                                                    \end{array} \right. \nonumber
\end{eqnarray}
\end{deftn}
For the OVA coding, every two rows have exactly two
entries with opposite signs,  $ \Delta_{min}^{Ham(OVA)}(M)=2$. For the OVO coding, every two rows
have exactly one entry with opposite signs, $\Delta_{min}^{Ham(OVO)}(M)=\left(\left(\begin{array}{c}
                                                                                 \!\!\!\! N_C\!\!\!\! \\
                                                                                      \!\!\!\! 2\!\!\!\!
                                                                               \end{array}
\right)-1\right)/2+1$, where $N_C$ is the number of classes. Moreover, for a random coding matrix with its entries uniformly chosen, the expected value of any two different class codes is
$\Delta^{Ham(RAND)}(M)$ is $N_L/2$, where $N_L$ is
the code length.
A larger $\Delta^{Ham(RAND)}(M)$ is more likely to rectify the errors committed by individual base classifiers.
Therefore, when $N_L\gg N_C$, a random ECOC is expected to be more robust and rectify more errors than the OVO and OVA approaches ~\cite{Allwein:2001:RMB:944733.944737}. However, the choice of only either binary or ternary codes hinders the construction of longer and more discriminative codes. For example, binary codes can only construct codes of length $N_L \leq 2^{N_C}$. Moreover, they lead to many redundant base learners \cite{4668347}.   In contrast, for $N$-ary ECOC,  the expected value of $\Delta^{Ham(N)}(M)$ is $N_L (1-\frac{1}{N})$ (see Lemma \ref{them:ham} for proof).   $\Delta^{Ham(N)}(M)$ is expected to be larger than $\Delta^{Ham(RAND)}(M)$ when $N\geq 3$.\vspace{-1 mm}
\begin{table}[h]
\renewcommand*\arraystretch{1.1}
\small
\caption{\label{tab:HamDist} Comparison of  Distance of Different Codes. }
\centering
\begin{tabular}{|c|c|c|c|}
  \hline
Coding  & Generalized & Absolute    \\
Schemes &Hamming Distance& Distance \\\hline
OVA       & $2$ & $4$ \\\hline
OVO   & $\left(\left(\begin{array}{c}
                                                                                \!\!\!\! N_C\!\!\!\! \\
                                                                                 \!\!\!\! 2\!\!\!\!
                                                                               \end{array}
\right)-1\right)/2+1$ & $2N_C -2$ \\\hline
ECOC   & $N_L/2$  & $N_L$\\\hline
$N$-ary ECOC & $N_L (1-1/N)$ & $N_L (N^2-1)/3N$\\\hline
\end{tabular}
\end{table}
\begin{lemma}\label{them:ham}
The expected  Hamming  distance for any two distinct rows in a random $N$-ary coding matrix $M \in \{1,2, \cdots, N\}^{N_C \times N_L}$ is
\begin{equation}
\Delta^{Ham(N)}(M) = N_L (1-\frac{1}{N}).\label{eq:nary_dist_ham}\vspace{-3mm}
\end{equation}
\end{lemma}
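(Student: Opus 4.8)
The plan is to observe that, because the $N$-ary coding matrix $M \in \{1,2,\ldots,N\}^{N_C \times N_L}$ contains no zero entries, only the first two cases of Definition~\ref{dfn:Ham} ever apply. Consequently the generalized Hamming distance between any two distinct rows reduces to a plain count of the columns in which the two rows disagree. Writing this count as a sum of indicator variables and invoking linearity of expectation then turns the problem into computing a single per-column disagreement probability.

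Concretely, first I would fix two distinct rows $r_1, r_2$ and express the distance as
\[
\Delta^{Ham(N)}(M(r_1,:),M(r_2,:)) = \sum_{s=1}^{N_L} \mathbbm{1}\{M(r_1,s)\neq M(r_2,s)\},
\]
then take expectations and pull the sum outside the expectation, so that
\[
\E\big[\Delta^{Ham(N)}(M)\big] = \sum_{s=1}^{N_L} \Pr\big[M(r_1,s)\neq M(r_2,s)\big].
\]

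Next I would evaluate the per-column disagreement probability. Since every entry of $M$ is sampled independently and uniformly from $\{1,\ldots,N\}$ (Step~1 of the $N$-ary ECOC procedure), the two entries $M(r_1,s)$ and $M(r_2,s)$ in a fixed column $s$ are independent uniform draws. The probability that they coincide is $\sum_{k=1}^{N}(1/N)(1/N) = 1/N$, so the probability that they differ is $1-1/N$. Substituting this identical value into every term of the sum yields $\E[\Delta^{Ham(N)}(M)] = N_L(1-1/N)$, which is exactly the claimed identity in \eqref{eq:nary_dist_ham}.

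The argument is essentially a routine expectation computation, so I do not anticipate a serious obstacle; the only points requiring care are justifying that the zero-valued case of Definition~\ref{dfn:Ham} is vacuous here (the alphabet $\{1,\ldots,N\}$ contains no zeros, so the $0.5$ branch never triggers) and that the uniform-sampling assumption genuinely makes the two entries within a column independent, which is precisely what licenses the collision probability $1/N$ and hence the factor $1-1/N$.
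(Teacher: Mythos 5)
Your proposal is correct and follows essentially the same route as the paper: compute the per-column collision probability $1/N$ for two independent uniform draws from $\{1,\ldots,N\}$, hence disagreement probability $1-1/N$, and multiply by $N_L$ via linearity of expectation. Your explicit observation that the $0.5$ branch of Definition~\ref{dfn:Ham} is vacuous (no zero entries in an $N$-ary matrix) is a small point of added care that the paper leaves implicit, but it does not change the argument.
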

\begin{proof}
Given a random matrix $M$ with components chosen uniformly over $\{1,2, \cdots, N\}$, for any distinct pair of entries in  column $s$, i.e., $M(r_i, s)$ and $M(r_j, s)$, the probability of $M(r_i,s) = M(r_j,s)$ is $\frac{1}{N}$.
Then   the probability of $M(r_i,s) \neq M(r_j,s)$ is  $1 - \frac{1}{N}$.

Therefore, according to  Definition \ref{dfn:Ham}, the expected  Hamming  distance for $M$ can be computed as follows,
 \begin{eqnarray}
\Delta^{Ham(N)}(M) &=& N_L \left(1\times(1-\frac{1}{N})+0\times\frac{1}{N}\right)\nonumber\\
 &=& N_L (1-\frac{1}{N}).\nonumber
\end{eqnarray}
\end{proof}
\noindent\textbf{Absolute Distance:}
Different from the Hamming distance, the absolute distance measures the  difference scales.   Thus, for a fair comparison, we assume that  coding values are in the same scale for the absolute distance analysis. The definition of absolute distance is given as follows,
\begin{deftn}[Absolute Distance]\label{dfn:abs} Let $M(r_1,:)$ and $ M(r_2,:)$ denote row $r_1$ and $ r_2$ coding vectors in coding matrix $M$ with length $N_L$, respectively. Then the absolute distance can be expressed as
\begin{eqnarray}
&&\Delta^{abs}(M(r_1,:),M(r_2,:))  = \sum_{s=1}^{N_L} |M(r_1,s)-M(r_2,s)|.\nonumber
\end{eqnarray}
\end{deftn}
\vspace{-3 mm}
 For the convenience of analysis, we first give the expected absolute distance for $N$-ary coding matrix in Lemma \ref{them:abs}.
\begin{lemma}\label{them:abs}
The expected absolute distance for any two distinct rows in a random $N$-ary coding matrix $M\in \{1,2, \cdots, N\}^{N_C \times N_L}$ is
\begin{equation}
\Delta^{abs(N)}(M) = N_L \frac{(N^2-1)}{3N}.\label{eq:nary_dist}
\end{equation}
\end{lemma}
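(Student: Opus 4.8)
The plan is to exploit the column-wise independence of the random coding matrix and thereby reduce the computation to a single scalar expectation. Since each entry $M(r,s)$ is drawn independently and uniformly from $\{1,\dots,N\}$, the $N_L$ summands $|M(r_1,s)-M(r_2,s)|$ appearing in Definition \ref{dfn:abs} are identically distributed, one per column. By linearity of expectation, $\E[\Delta^{abs(N)}(M)] = N_L\,\E[|X-Y|]$, where $X$ and $Y$ are independent random variables uniform on $\{1,\dots,N\}$ (the entries in a fixed column for two distinct rows $r_1,r_2$). This collapses the whole problem to evaluating $\E[|X-Y|]$.

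First I would write $\E[|X-Y|] = \frac{1}{N^2}\sum_{a=1}^{N}\sum_{b=1}^{N}|a-b|$, using that the pair $(X,Y)$ takes each of the $N^2$ ordered values with equal probability $1/N^2$. The key combinatorial step is to regroup this double sum by the common value $k=|a-b|$: for each $k\in\{1,\dots,N-1\}$ there are exactly $2(N-k)$ ordered pairs $(a,b)$ with $|a-b|=k$ (the factor of two coming from the two cases $a-b=\pm k$, each giving $N-k$ admissible pairs), while $k=0$ contributes nothing. Hence $\sum_{a,b}|a-b| = 2\sum_{k=1}^{N-1}k(N-k)$.

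Then I would evaluate this using the standard identities $\sum_{k=1}^{N-1}k=\frac{(N-1)N}{2}$ and $\sum_{k=1}^{N-1}k^2=\frac{(N-1)N(2N-1)}{6}$, obtaining $\sum_{k=1}^{N-1}k(N-k) = N\cdot\frac{(N-1)N}{2}-\frac{(N-1)N(2N-1)}{6}$. Factoring out $\frac{N(N-1)}{6}$ leaves $3N-(2N-1)=N+1$, so the sum simplifies to $\frac{N(N-1)(N+1)}{6}=\frac{N(N^2-1)}{6}$. Consequently $\sum_{a,b}|a-b| = \frac{N(N^2-1)}{3}$ and $\E[|X-Y|] = \frac{1}{N^2}\cdot\frac{N(N^2-1)}{3} = \frac{N^2-1}{3N}$, which multiplied by $N_L$ yields the claimed $\Delta^{abs(N)}(M) = N_L\frac{N^2-1}{3N}$.

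The argument is entirely elementary, so I do not expect a genuine obstacle; the only point requiring care is the counting step, namely confirming that precisely $2(N-k)$ ordered pairs realize $|a-b|=k$ and that the diagonal is correctly excluded. Everything downstream is just the two finite-sum identities and routine algebraic simplification, so the risk is purely bookkeeping rather than anything conceptual.
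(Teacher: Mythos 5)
Your proof is correct and takes essentially the same approach as the paper's: both reduce the claim, via linearity of expectation over the $N_L$ columns, to evaluating $\frac{1}{N^2}\sum_{a,b=1}^{N}|a-b|$ for a single column and then apply the standard power-sum identities. The only (immaterial) difference is the bookkeeping of the double sum --- you group the ordered pairs by the common distance $k$, counting $2(N-k)$ pairs for each $k\ge 1$, whereas the paper sums the triangular blocks $a_1+\cdots+a_{N-1}$ of its distance table; both regroupings yield $\frac{N(N^2-1)}{3}$ and hence the stated $N_L\frac{N^2-1}{3N}$.
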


\begin{proof}
Given a random matrix $M$ with components chosen uniformly over $\{1,2, \cdots, N\}$, for any distinct pair of entries in  column $s$, i.e., $M(r_i, s), M(r_j, s)$, we denote the corresponding expected absolute distance as $\Delta^{abs(N)} (M(:,s)) = \E ~{d_{ij}} =  \E~ |M(r_i,s) - M(r_j,s)|$.

It can be calculated by averaging all the possible pairwise distances $d_{ij}$ for $i,j \in \{1,2,\cdots, N\}$.
Since the two numbers $r_i, r_j$ are chosen randomly from $\{1,...,N\}$,
 $\Delta^{N}(M)$ can be expressed as follows:
\begin{eqnarray}
\Delta^{abs(N)}(M(:,s)) &=& \frac{1}{N^2} \sum_{i,j=1}^N d_{ij} \nonumber
\\ &=& \frac{1}{N^2} \sum_{i,j=1}^N |M(r_i,s) - M(r_j,s)|
\end{eqnarray}

\begin{table}[h]
\caption{\label{tab:d_nary} All Possible Choices of $d_{ij}$.}
\normalsize
\centering
\begin{tabular}{|c|c|c|c|c|}
  \hline
  $d_{ij}$ & $r_j = 1$ & $r_j = 2$ & $\cdots$& $r_j = N$ \\\hline
  $r_i = 1$        & 0& 1& $\cdots$& N-1\\
  $r_i = 2$         & 1& 0& $\cdots$& $\vdots$\\
  $\vdots$ & $\vdots$& $\vdots$& 0& 1\\
  $r_i = N$        & N-1& $\cdots$& 1& 0\\
  \hline
\end{tabular}
\end{table}
First, we define the  sequence $a_n$ as follows:
\begin{eqnarray}
a_n =  (1+2+\cdots+n)  = \frac{n(n+1)}{2}.
\end{eqnarray}
Table \ref{tab:d_nary} gives  all the possible choices of $d_{ij}$. Thus the calculation of $\Delta^{N}(M)$ is equal to taking the average of all the entries in  Table \ref{tab:d_nary}, which can be expressed as follows:
\begin{eqnarray}
\Delta^{abs(N)}(M(:,s))\!\!\!\!\!\!&=&\!\!\!\!\!\!\frac{2}{N^2} (a_1 + a_2 + \cdots + a_{N-1}) \label{eq:nary_dist_sym}\\
\!\!\!&=&\!\!\!\!\!\!\frac{1}{N^2} (1\times 2+ 2\times 3 +\cdots+ (N-1)N) \nonumber\\
\!\!\!&=&\!\!\!\!\!\!\frac{1}{N^2} \sum_{n=1}^N (n^2-n)\nonumber\\
\!\!\!&=&\!\!\!\!\!\!\frac{1}{N^2} \left(\sum_{n=1}^N n^2 - \sum_{n=1}^N n\right)\nonumber\\
\!\!\!&=&\!\!\!\!\!\!\frac{1}{N^2} \left(\frac{N(N+1)(2N+1)}{6} - \frac{N(N+1)}{2}\right)\nonumber\\
\!\!\!&=&\!\!\!\!\!\!\frac{N^2-1}{3N},\nonumber
\end{eqnarray}
where (\ref{eq:nary_dist_sym}) comes from the symmetry of $d_{ij}$.
Then
\begin{eqnarray}
\Delta^{abs(N)}(M) =  \sum_{s=1}^{N_L} \Delta^{abs(N)}(M(:,s)) = N_L \frac{(N^2-1)}{3N}.\nonumber\end{eqnarray}
\end{proof}
For the OVA coding scheme,  every two rows have exactly two
entries with opposite signs, the minimum absolute distance
$\Delta_{min}^{abs(OVA)}(M)=4$; while for the OVO coding scheme,  every two
rows have exactly one entry with opposite signs and only $2N_C-4$
entries with a difference of exactly one,
$\Delta_{min}^{abs(OVO)}(M)=2N_C -2$. For  binary random codes, the
expected absolute distance between any two different rows is $\Delta^{abs(RAND)}(M)=N_L$. Thus, when $N$ is large, $\Delta^{abs(N)}(M)$  is much
larger than $\Delta^{abs(RAND)}(M)$, and $N$-ary coding is expected to be
better.

The Hamming and absolute distance comparisons for different codes are summarized in the Table \ref{tab:HamDist}. We can see that $N$-ary coding scheme has  an advantage in creating more discriminative codes with larger distances for different classes in both two distance measures. This advantage is very important to analyze the generalization error analysis of $N$-ary ECOC.

\section{Complexity Comparison }\label{sec:complexity}
As discussed in Section \ref{sec:necoc},  $N$-ary codes have a better error correcting capability than the traditional random codes
when $N$ is larger than 3.   However, one notes that the base classifier of each column is no longer  solving a binary problem. Instead, we randomly break a large multi-class problem into a number of smaller multi-class subproblems. These subproblems are more complicated than binary problems and they incur additional computational cost. Hence, there is a trade-off between the error correcting capability and computational cost.

If the complexity of the algorithm employed to learn the  small-size multi-class base problem is $\mathcal{O}(g(N,N_{tr},d))$  with $N$ classes, $N_{tr}$ training examples, $d$ predictive features and $g(N,N_{tr},d)$ is the complexity function w.r.t $N$, $N_{tr}$, $d$,  then the computational complexity of $N$-ary codes is $\mathcal{O}(N_L g(N,N_{tr},D))$  for codes of length $N_L$.

Taking SVM as the base learner for example, one can learn each binary classification task created by binary ECOC codes with training complexity of  $\mathcal{O}(N_{tr}^3)$ for traditional SVM solvers that build on the quadratic programming (QP) problems. However, a major stumbling block for these traditional methods is in scaling up these QP¡¯s to large data sets, such
as those commonly encountered in data mining applications. Thus, some state-of-the-art SVM implementations, e.g., LIBSVM \cite{Chang:2011:LLS:1961189.1961199}, Core Vector Machines \cite{DBLP:journals/jmlr/TsangKC05},  have been proposed to reduce training time complexity from $\mathcal{O}(N_{tr}^3)$ to $\mathcal{O}(N_{tr}^2)$ and $\mathcal{O}(N_{tr})$, respectively.  Nevertheless, how to efficiently train SVM is not the focus of our paper. For the convenience of complexity analysis, we use the time complexity of the traditional SVM solvers as the complexity of the base learners. Then, the complexity of  binary ECOC codes is $\mathcal{O}(N_L N_{tr}^3)$. Different from ECOC in the ensemble manner, one can directly address the  multi-class problem in one single optimization process, e.g., multi-class SVM \cite{Crammer:2002:AIM:944790.944813}. This kind of model combines multiple binary-class optimization problems into one single objective function and simultaneously achieves the classification of multiple
classes. In this way, the correlations across multiple binary classification tasks are captured in the learning model. The resulting QP optimization requires a complexity of $\mathcal{O}((N_C N_{tr})^3)$.   However, it causes  high computational complexity for a relatively large number of classes. In contrast, $N$-ary codes are in the complexity of $\mathcal{O}( N_L (NN_{tr})^3)$, where $N < N_C$.  In this case, it achieves  better trade-off between the error correcting capability and computational cost, especially for large class size $N_C$.

We summarize the time complexity of different codes in Table \ref{tab:Compelxity}. In Section \ref{sec:acc:vs:N}, our empirical studies  indicate that $N$ does not need to be too large to achieve optimal classification performance.

%


\begin{table}[h]
\renewcommand*\arraystretch{1}
\normalsize
\caption{\label{tab:Compelxity} Complexity Analysis}
\centering
\begin{tabular}{|c|c|c|}
  \hline
Classifier & SVM  \\ \hline
Binary ECOC       & $\mathcal{O}(N_L N_{tr}^3)$  \\\hline
Direct Multi-Class   & $\mathcal{O}((N_C N_{tr})^3)$  \\\hline
$N$-ary ECOC &$\mathcal{O}( N_L (NN_{tr})^3)$ \\\hline
\end{tabular}
\end{table}

\section{Generalization Analysis of $N$-ary ECOC.}\label{sec:Bound}
In Section~\ref{sec:correct}, we study the error correcting ability of an $N$-ary code. 
In Section~\ref{sec:geecoc}, we derive the generalization error bound for  $N$-ary ECOC independent of  the base classifier.

\subsection{Analysis of Error Correcting on $N$-ary Codes}~\label{sec:correct}
To study the error correcting ability of $N$-ary codes, we first define the  distance between the codes in any distinct pair of rows, $M(r_i)$ and $M(r_j)$, in an $N$-ary coding matrix $M$ as $\Delta^N(M(r_i),M(r_j))$. It is the sum of the $N_L$ distances between two entries, $M(r_i, s)$ and $M(r_j, s)$ in the same column $s$ at two different rows, $r_i$ and $r_j$, i.e.,
$
\Delta^N(M(r_i),M(r_j)) = \sum_{s=1}^{N_L} \Delta^N(M(r_i,s),M(r_j,s)).
$
We further define $\rho = \min_{r_i\neq r_j}{\Delta^N(M(r_i),M(r_j))}$ as the minimum distance between any two rows in $M$.

\begin{proposition}\label{thm:min_margin}
Given an $N$-ary coding matrix $M$ and a vector of predicted labels $f(\bx)=[f_1(\bx)),\cdots,f_{N_L}(\bx))]$ by $N_L$ base classifiers for a test instance $\bx$. If $\bx$
is misclassified by the $N$-ary ECOC decoding, then the distance between the correct label in $M(y)$ and  $f(\bx)$ is greater than one half of $\rho$, i.e.,
\begin{equation}\label{eq:margin}
  \Delta^N(M(y),f(\bx))
\geq \frac{1}{2} \rho.
\end{equation}
\end{proposition}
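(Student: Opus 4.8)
The plan is to recognize Proposition~\ref{thm:min_margin} as the classical minimum-distance decoding argument for error-correcting codes, in which the triangle inequality does all the real work. First I would make the decoding rule explicit. By step~4 of the $N$-ary ECOC procedure, the predicted label is the nearest codeword, $\hat{y} = \arg\min_{i} \Delta^N(f(\bx), M(i))$. Saying that $\bx$ is misclassified then means precisely that $\hat{y} \neq y$ while $\hat{y}$ still minimizes the distance to the prediction; in particular this yields the inequality
\[
\Delta^N(f(\bx), M(\hat{y})) \leq \Delta^N(f(\bx), M(y)).
\]
Because $M(\hat{y})$ and $M(y)$ are distinct rows of $M$, the definition of $\rho$ as the minimum inter-row distance gives $\Delta^N(M(y), M(\hat{y})) \geq \rho$.

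Next I would chain these two facts through the triangle inequality applied to the three points $M(y)$, $M(\hat{y})$, and $f(\bx)$. This produces
\[
\rho \leq \Delta^N(M(y), M(\hat{y})) \leq \Delta^N(M(y), f(\bx)) + \Delta^N(f(\bx), M(\hat{y})).
\]
Substituting the misclassification inequality from the previous step, together with the symmetry $\Delta^N(f(\bx), M(y)) = \Delta^N(M(y), f(\bx))$, bounds the second summand by the first, so that $\rho \leq 2\,\Delta^N(M(y), f(\bx))$. Dividing by $2$ delivers $\Delta^N(M(y), f(\bx)) \geq \tfrac{1}{2}\rho$, which is exactly~(\ref{eq:margin}).

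The main obstacle, and really the only nontrivial point, is justifying that $\Delta^N$ satisfies the triangle inequality and is symmetric, since the argument treats it as a genuine metric on codewords. I would therefore verify these two properties for the concrete instantiations used in the paper. For the absolute distance $\Delta^{abs}$ of Definition~\ref{dfn:abs} both are immediate: it is a sum over columns of $|M(r_1,s)-M(r_2,s)|$, i.e.\ an $\ell_1$ distance, for which symmetry and the triangle inequality hold columnwise and are preserved under summation. For the generalized Hamming distance of Definition~\ref{dfn:Ham}, note that in the $N$-ary setting all entries of $M$ and all predicted labels lie in $\{1,\dots,N\}$ with no zero values, so the $0.5$ case never applies and $\Delta^{Ham}$ reduces to the ordinary Hamming distance, which is a metric. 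Establishing this reduction is the step I would be most careful about, as it is what lets the generic triangle-inequality argument above go through for either choice of row-separation measure.
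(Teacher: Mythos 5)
Your proposal is correct and follows essentially the same route as the paper: the misclassification inequality $\Delta^N(f(\bx),M(\hat y))\le\Delta^N(f(\bx),M(y))$ combined with the triangle inequality and the definition of $\rho$, differing only in that the paper applies the triangle inequality columnwise before summing. Your added verification that the absolute and (zero-free) generalized Hamming distances are genuine metrics is a detail the paper leaves implicit, but it does not change the argument.
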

\begin{proof}
Suppose that the distance-based decoding incorrectly classifies a test instance $\bx$ with known label $y$. In other words, there exists a label $r\neq y$
for which
\begin{equation*}
 \Delta^N(M(y),f(\bx))\geq \Delta^N(M(r),f(\bx)).
\end{equation*}

Here, $\Delta^N(M(y),f(\bx))$ and $\Delta^N(M(r),f(\bx))$ can be expanded as the element-wise summation. Then, we have
\begin{eqnarray}
~~\sum_{s=1}^{N_L}
\Delta^N(M(y,s),f_s(\bx)) \geq \sum_{s=1}^{N_L} \Delta^N(M(r,s),f_s(\bx)). \label{eq:sum}
\end{eqnarray}
Based on the above inequality, we obtain:
\begin{eqnarray}
 && \Delta^N(M(y),f(\bx)) \nonumber \\ \!\!\!\!\!
=&& \!\!\!\!\! \frac{1}{2}
\sum_{s=1}^{N_L} \left\{\Delta^N(M(y,s),f_s(\bx))\!+ \!\Delta^N(M(y,s),f_s(\bx))\right\}\nonumber\\
\!\!\!\!\geq&& \!\!\!\!\!\!\frac{1}{2}
\sum_{s=1}^{N_L} \{\Delta^N(M(y,s),f_s(\bx))\!+ \! \Delta^N(M(r,s),f_s(\bx))\}\label{eq:relax_1}\\
\!\!\!\!\geq&& \!\!\!\!\!\!\frac{1}{2}
\sum_{s=1}^{N_L} \{\Delta^N(M(y,s),M(r,s))\} \!\!\label{eq:relax_2}\\
 \!\!\!\!=&&\!\!\!\!\!\!\frac{1}{2}{\Delta^N(M(r),M(y))}\nonumber\\
 \!\!\!\!\geq &&\!\!\!\!\!\!\frac{1}{2}\rho\nonumber,
\end{eqnarray}
where Inequality (\ref{eq:relax_1}) uses  Inequality (\ref{eq:sum}) and Inequality (\ref{eq:relax_2}) follows from the
triangle inequality.\end{proof}
\vspace{-0mm}

\begin{remark}
From Proposition~\ref{thm:min_margin}, one notes that
a mistake on a test instance $(\bx,y)$
implies that  $\Delta^N(M(y),f(\bx)) \geq \frac{1}{2}\rho$. In other words, the prediction codes are not required to be exactly the same as the ground-truth codes for all the base classifications. As long as the distance is no larger than $\frac{1}{2}\rho$, $N$-ary ECOC can rectify the error committed by some base classifiers, and is still able to make an accurate prediction. This error correcting ability is very important especially when the labeled data is insufficient. Moreover, a larger minimum distance, i.e., $\rho$, leads to a stronger capability of error correcting. Note that this proposition holds for all the distance measures and traditional ECOC schemes due to the fact that only the triangle inequality is required in the proof.
\end{remark}

\subsection{Generalization Error of $N$-ary ECOC.} \label{sec:geecoc}
The next result provides a generalization error bound for \emph{any} type of base classifier, such as the SVM classifier and decision tree, used in the $N$-nary ECOC classification.

\begin{theorem}[$N$-ary ECOC Error Bound] \label{thm:N_ary_error}
Given $N_L$ base classifiers, $f_1,
\cdots, f_{N_L}$, trained on $N_L$ subsets $\{(\bx_i,
M(y_i,s))_{i=1,\cdots,N_{tr}}\}_{s=1,\cdots,N_L}$ of the dataset with $N_{tr}$ instances for coding matrix
$M\in\{1,2,\cdots,N\}^{N_C\times N_L}$. The generalized error rate for the $N$-ary ECOC approach using distance-based
decoding is upper bounded by
\begin{equation}
 \frac{2 N_L \bar{B}}{\rho },
 \end{equation}
where $\bar{B} = \frac{1}{N_L}\sum_{s=1}^{N_L} B_s$ and $B_s$ is the upper bound of the distance-based loss for the $s^{th}$ base classifier.
\end{theorem}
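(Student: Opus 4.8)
The plan is to reduce the bound on the misclassification probability to a bound on the expected decoding distance, using Proposition~\ref{thm:min_margin} as the bridge. The central observation is that Proposition~\ref{thm:min_margin} converts the event ``$\bx$ is misclassified'' into the quantitative condition $\Delta^N(M(y),f(\bx)) \geq \frac{1}{2}\rho$. This lets me dominate the $0/1$ error indicator by a scaled version of the decoding distance, after which the theorem follows from linearity of expectation and the per-column loss bounds $B_s$.

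First, I would write the generalized error rate as $\E\,[\mathbbm{1}[\bx~\textrm{is misclassified}]]$, the expectation of the misclassification indicator over the underlying data distribution. For any fixed instance $(\bx,y)$, I claim the pointwise inequality
\begin{equation}
\mathbbm{1}[\bx~\textrm{is misclassified}] \leq \frac{2}{\rho}\,\Delta^N(M(y),f(\bx)).\nonumber
\end{equation}
When $\bx$ is correctly classified the left side is $0$ and the right side is nonnegative, so the inequality is trivial; when $\bx$ is misclassified, Proposition~\ref{thm:min_margin} gives $\Delta^N(M(y),f(\bx)) \geq \frac{1}{2}\rho$, so the right side is at least $1$, which equals the indicator. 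Taking expectations of both sides yields
\begin{equation}
\textrm{error rate} \leq \frac{2}{\rho}\,\E\,[\Delta^N(M(y),f(\bx))].\nonumber
\end{equation}

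Next, I would expand the decoding distance column-wise, $\Delta^N(M(y),f(\bx)) = \sum_{s=1}^{N_L}\Delta^N(M(y,s),f_s(\bx))$, and push the expectation inside by linearity. Interpreting $\Delta^N(M(y,s),f_s(\bx))$ as the distance-based loss of the $s^{th}$ base classifier and using $B_s$ as its (expected) upper bound gives $\E\,[\Delta^N(M(y,s),f_s(\bx))] \leq B_s$, so that $\E\,[\Delta^N(M(y),f(\bx))] \leq \sum_{s=1}^{N_L} B_s = N_L\bar{B}$. Substituting back produces the claimed bound $2N_L\bar{B}/\rho$. The routine manipulations --- nonnegativity, linearity, and the closed form $\sum_s B_s = N_L\bar{B}$ --- are immediate; the step carrying the real content is the pointwise domination of the indicator, which is exactly where Proposition~\ref{thm:min_margin} (and through it the minimum distance $\rho$) enters. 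The main point to pin down carefully is the precise meaning of the ``distance-based loss'' and the sense in which $B_s$ bounds it, so that the single inequality $\E\,[\Delta^N(M(y,s),f_s(\bx))] \leq B_s$ is justified for the base classifier family in use; once that definition is fixed, the remainder is a one-line averaging argument.
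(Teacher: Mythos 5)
Your proof is correct and follows essentially the same route as the paper's: both use Proposition~\ref{thm:min_margin} to lower-bound the decoding distance of every misclassified instance by $\rho/2$, then expand that distance column-wise and bound the accumulated losses by $N_L\bar{B}$. The only difference is cosmetic --- you phrase the error rate as the expectation of an indicator dominated pointwise by $\frac{2}{\rho}\Delta^N(M(y),f(\bx))$, whereas the paper counts the number $a$ of errors over a finite test set of size $N_{te}$ and divides.
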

\begin{proof}
According to Proposition~\ref{thm:min_margin}, for any
misclassified data instance, the distance between its
incorrect label vector $f(\bx)$ and the true label vector $M(y)$ should
satisfy the minimal distance $\frac{\rho}{2}$, i.e.,
$
\Delta^N(M(y),f(\bx)) = \sum_{s=1}^{N_L}
 \Delta^N(M(y,s),f_s(\bx)) \geq \frac{\rho}{2}.
$

Let $a$ be the number of incorrect label predictions for a set of test instances of size $N_{te}$.
One obtains
\begin{eqnarray}
a\frac{\rho}{2} \leq \sum^{N_{te}}_{i=1}\sum^{N_L}_{s=1} \Delta^N(M(y_i,s),f(\bx_i)).
\end{eqnarray}
 Then,
  \begin{eqnarray}
 a \leq \frac{2N_{te}\sum^{N_L}_{s=1} B_s} {\rho} =  \frac{2 N_{te} N_L \bar{B}}{\rho},
\end{eqnarray}
where $\bar{B} = \frac{1}{N_L}\sum_{s=1}^{N_L} B_s$.

Hence, the testing error rate is bounded by $\frac{2 {N_L} \bar{B}}{\rho}$.
\end{proof}
\begin{remark} From Theorem~\ref{thm:N_ary_error}, one notes that for the fixed $N_L$, the generalization
error bound of the $N$-ary ECOC  depends on the two following factors:
 \begin{enumerate}
  \item The averaged  loss $\bar{B}$ for all the base classifiers.  In practice, some base tasks may be badly designed due to the randomness. As long as the averaged loss $\bar{B}$ over all the tasks is small, the resulting ensemble classifier is still able to make a precise prediction.

  \item The minimum distance $\rho$ for coding matrix $M$. As we discussed in Proposition~\ref{thm:min_margin}, the larger $\rho$, the stronger capability of error correcting $N$-ary code has.
  \end{enumerate}
  Both two factors are affected by the choice of $N$. In particular,  $\bar{B}$  increases as $N$ increases since the base classification tasks become more difficult.
On the other hand, from experimental results in Figure \ref{fig:rho_ham}, it is observed that $\rho$ becomes larger when $N$ increases. Therefore, there is a tradeoff between these two factors.

\end{remark}

\section{Experimental Results} \label{sec:experiments}
We present experimental results on 11 well-known UCI multi-class datasets from a wide range of application domains. The statistics of these datasets are summarized in Table
\ref{tab:summary_datasets}. 
 The parameter $N$ is chosen by cross-validation procedure. With the tuned parameters, all methods are run ten realizations. Each  has different random splittings with fixed  training and testing size  as given in Table  \ref{tab:summary_datasets}.
 Our experimental results focus on the comparison of different encoding schemes rather than decoding schemes. Therefore, we fix generalized hamming distance  as the decoding strategy for all the coding designs for a fair comparison.

\begin{table}[h]
\renewcommand*\arraystretch{1}
\centering \caption{\label{tab:summary_datasets} Summary of the datasets used in the experiments.}
{\small
\begin{tabular}{c| c c c c }
  \hline
    Dataset    &\#Train  &\#Test  & \#Features & \#Classes\\\hline
    Pendigits  & 3498    &7494    & 16         & 10\\
    Vowel      & 462     &528      & 10         & 10\\
    News20     & 3993    &15935   & 62061      & 20\\
    Letters    & 5000    &15000   & 16         & 26\\
    Auslan     & 1000    &1565        & 128        & 95 \\
    Sector     & 3207    &6412        & 55197      & 105\\
    Aloi       & 50000   &58000     & 128          &1000\\
    Glass      & 100     &114         & 9          & 10\\
    Satimage   & 3435    &3000    & 36         & 7 \\
    Usps       & 4298    &5000    & 256        & 10\\
    Segment    & 1310    &1000    & 19         & 7
\end{tabular}
}
\end{table}

To investigate the effectiveness of
the proposed $N$-ary coding scheme, we compare it  with
problem-independent coding schemes including OVO, OVA, and random
ECOC as well as the state-of-art problem-dependent methods such as
ECOCONE and DECOC.   For the  random ECOC encoding scheme, or ECOC in short, and the $N$-ary ECOC  strategy,  we select the matrix with the largest minimum absolute distance
from 1000 randomly generated matrices.

For the problem-dependent approach DECOC, the length of the ECOC codes
is exactly $N_C-1$ \cite{Pujol06discriminantecoc:}. For the
ECOCONE, we initialize the ECOC matrix with OVA matrix
\cite{conf/icpr/EscaleraP06}. The length of the ECOC code is also
learned during the training step. We use the ECOC library
\cite{Escalera:2010:EOC:1756006.1756026} for the implementation of
all these baseline methods. To ensure a fair
comparison and easy replication of results, the base learners decision tree CART \cite{cart84-2} and linear SVM are implemented with the CART decision tree MATLAB toolbox and the LIBSVM \cite{Chang:2011:LLS:1961189.1961199} with the linear kernel in default settings, respectively.

\subsection{Error Bound Analysis on $N$-ary ECOC.}\label{sec:effect_code}
In the bound analysis, we choose  hamming distance \ref{dfn:Ham} to measure the row separation as a showcase.  
According to  Theorem \ref{thm:N_ary_error},  the generalization error bound depends on the minimum distance $\rho$ between any two distinct rows in the $N$-ary coding matrix $M$ as well as the average loss of base classifiers $\bar{B}$. In particular, the expected value of $\Delta^N(M)$ scales with $O(N)$.

In this subsection, we investigate the effect of the number
of  classes $N$ using the Pendigits dataset with CART  as the base classifier to illustrate the following aspects: (i)  $\Delta^{N}(M)$ between any two
distinct rows of codes (see Figure~\ref{fig:aver_distance_ham} ), (ii) $\rho$ (see Figure~\ref{fig:rho_ham}), (iii) $\frac{\bar{B}}{\rho}$ (see Figure ~\ref{fig:ratio_ham}), and (iv) the classification performance (see Figure~\ref{fig:acc_Impact_N}).   The empirical results corroborate the proposed error bounds in Theorem \ref{thm:N_ary_error}.

\begin{figure*}[htp!]
\centering
\subfigure[  Average $\Delta^N(M)$ v.s. $N$.] {\label{fig:aver_distance_ham}\includegraphics[trim = 3.5cm 9.5cm
4.1cm 5.5cm, clip, width=0.25\textwidth]{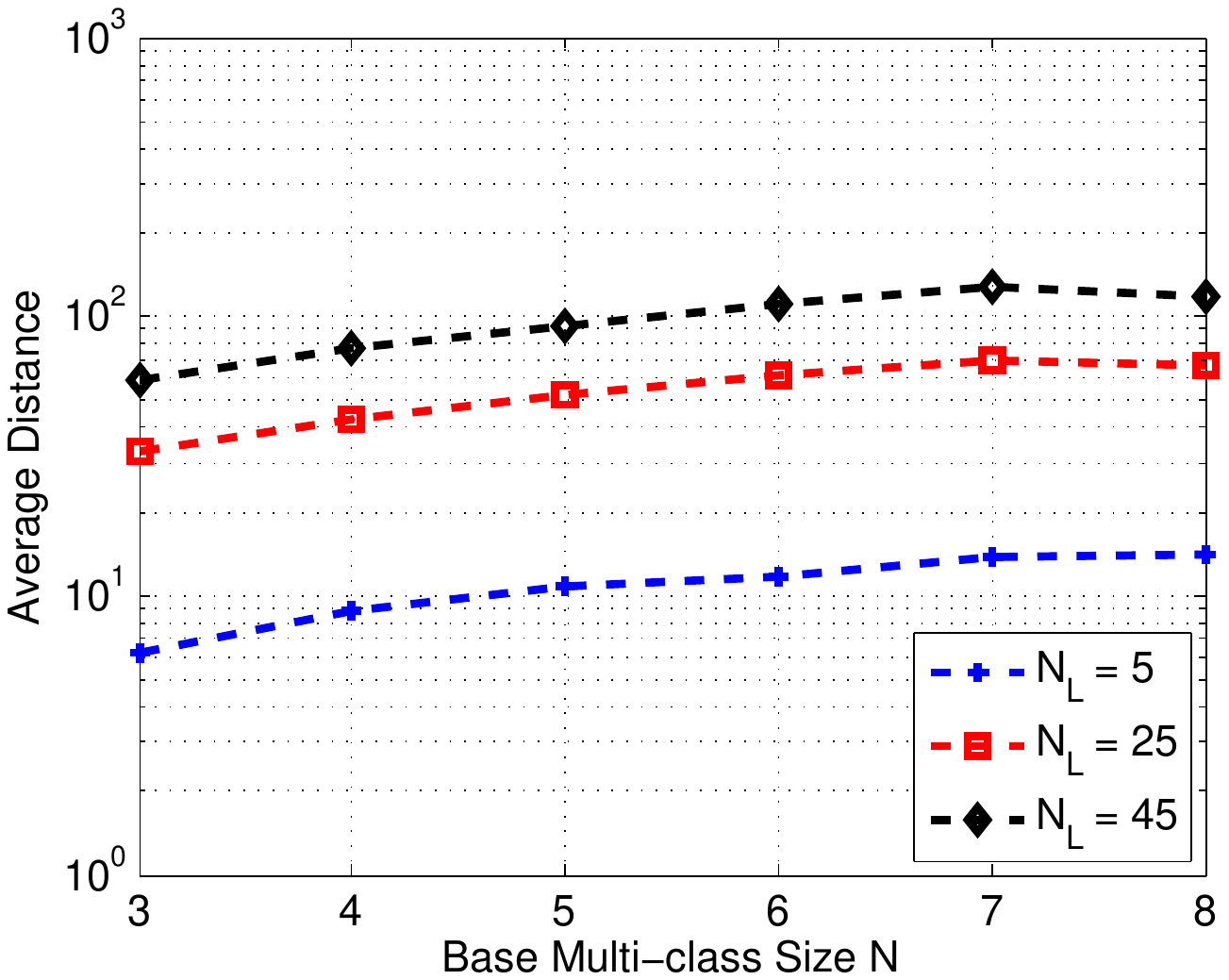}}
\subfigure[ $\rho$ v.s. $N$.] {\label{fig:rho_ham}\includegraphics[trim = 3.5cm 9.5cm
4.1cm 5.5cm,clip, width=0.25\textwidth]{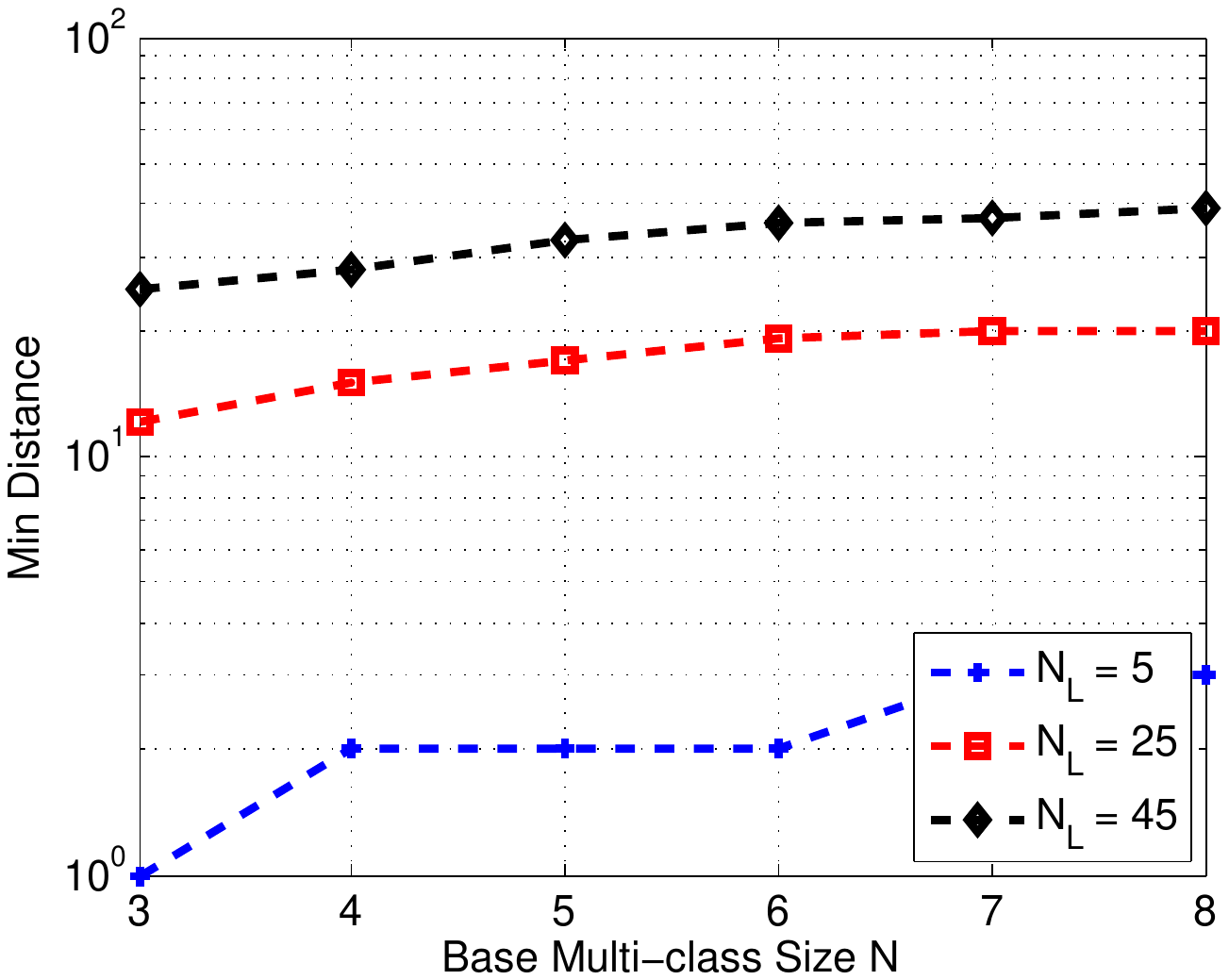}}
 \subfigure[ $\frac{\bar{B}}{\rho}$
v.s. $N$.]{\label{fig:ratio_ham}\includegraphics[trim = 3.5cm 9.5cm
4.1cm 5.5cm, clip, width=0.25\textwidth]{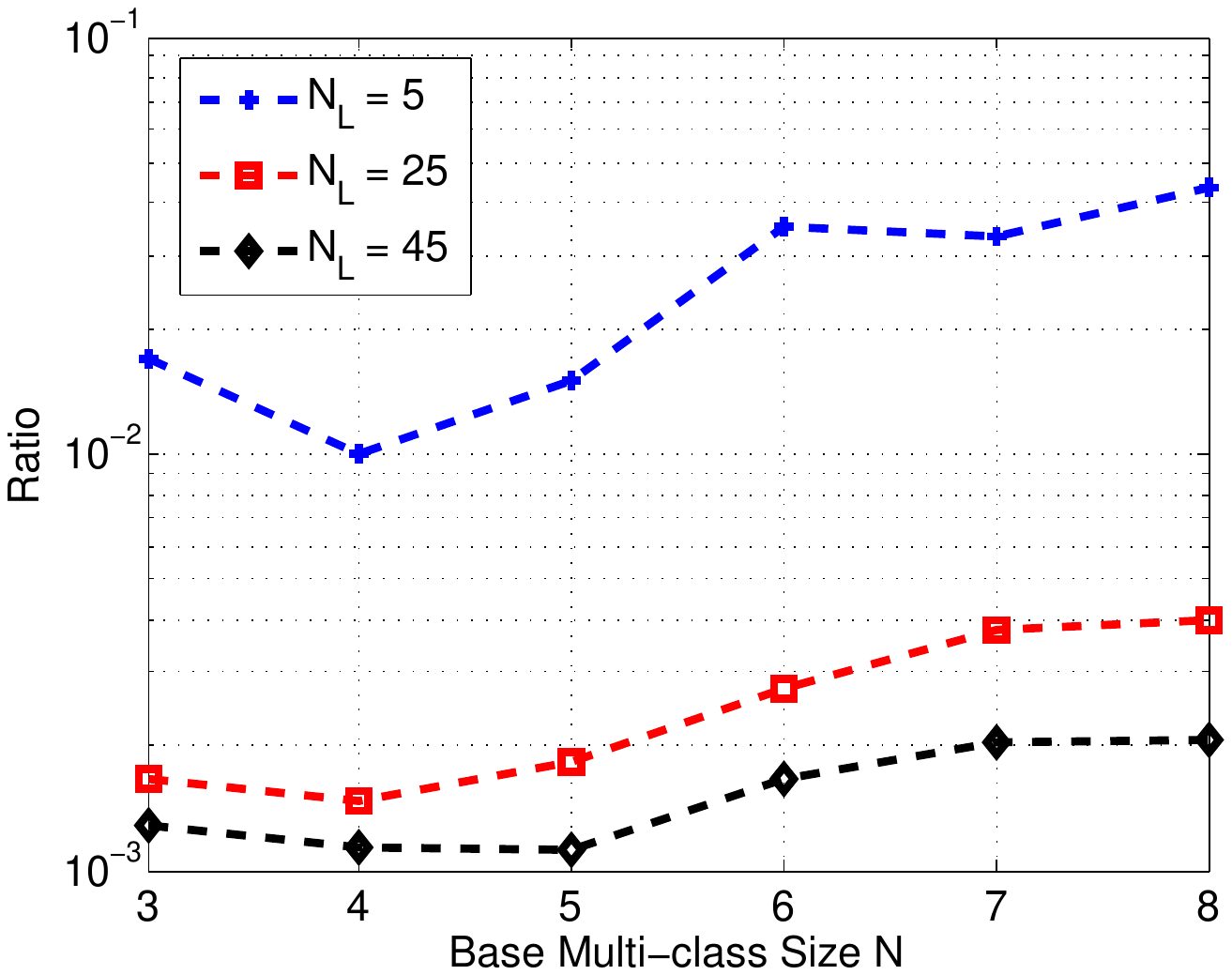}}
\caption{Experimental results  to study error bound (Theorem \ref{thm:N_ary_error}) w.r.t.  $N$.} \label{fig:bound_Impact_N_ham}
\end{figure*}

\subsubsection{Average distance
$\Delta^N(M)$ v.s. $N$.}\label{sec:Aver_dist}
Recall that the hamming distance for different coding matrices discussed in Section \ref{sec:necoc} are:  $\Delta^{N}(M)=N_L (1-\frac{1}{N})$,
$\Delta^{rand}(M)=N_L/2$,
 $\Delta_{\min}^{ova}(M)=2$ and $\Delta_{\min}^{ovo}(M)=\left(\left(\begin{array}{c}
                                                                                \!\!\!\! N_C\!\!\!\! \\
                                                                                 \!\!\!\! 2\!\!\!\!
                                                                               \end{array}
\right)-1\right)/2+1$.

From Figure \ref{fig:aver_distance_ham}, we observe that  the
empirical average hamming distances of the constructed $N$-ary
coding matrices for random $N$-ary schemes 
are close to $N_L (1-\frac{1}{N})$.
Furthermore, when there are 45 base classifiers, the average
distance for $N$-ary ECOC is larger than 30, which is
 larger than that of the binary random codes with an average absolute distance of 22.5. Moreover, a higher
$N$ leads to a larger average distance.
 Comparing  Figure \ref{fig:aver_distance_ham} and Figure \ref{fig:rho_ham} , the large average
distance $\Delta^N(M)$ also correlates with the large minimum
distance $\rho$.

\subsubsection{Minimum distance $\rho$ v.s. $N$.}\label{sec:Min_dist}
For the Pendigits dataset with 10 classes, $\rho$ for OVA and OVO are 4 and 18, respectively.
From Figure \ref{fig:rho_ham},  we
observe that with a fixed number of base classifiers, $\rho$
 increases with the number of multi-class subproblems  of class-size $N$, meanwhile $\rho$ also increases with respect to
the code length $N_L$.
Furthermore, in comparison to the other coding schemes, our
proposed method usually creates a coding matrix with a large $\rho$.
 For example, in Figure \ref{fig:rho_ham}, one observes that when there are 25
and 45 base classifiers, the corresponding $\rho$ for binary random
codes are 0. On the other hand, $N$-ary ECOC, given a
sufficiently large $N$, creates an $N$-ary coding  matrix with
$\rho$ to be larger than 10 and 20, respectively.
Although $N$-ary ECOC creates
an $N$-ary coding matrix with a large $\rho$ when  $N$ is larger, in real-world applications,
 it is preferred that $N$ is not too large to ensure reasonable computational cost and difficulty of  subproblems.
In short, $N$-ary ECOC provides a better alternative to creating a
coding matrix with a large class separation compared to traditional coding
schemes.

\subsubsection{Ratio $\bar{B}/\rho$ v.s. $N$.}
Both $\bar{B}$ and $\rho$ are dependent on $N$. Moreover, from the generalization error bound, we observe that $\bar{B}/\rho$ directly affects  classification performance.
Hence, this ratio, which bounds the classification error, requires further investigation.
Figure \ref{fig:ratio_ham} shows that when $N = 4$, the ratio $\bar{B}/\rho$ is lowest. This        observation suggests that the more the row and column separation of the coding matrix, the stronger the capability  of
error correction  \cite{Dietterich95solvingmulticlass}.
Therefore, $N$-ary ECOC is a better way to creating the coding matrix with
large separation among the classes as well as more diversity, compared to the binary
and ternary coding schemes. One notes that $\bar{B}/\rho$ starts to increase when $N \geq 5$. This means that the increase of  the average base classifier loss $\bar{B}$ overwhelms the increase in $\rho$. The reason for this phenomena is the increase in difficulty of the subproblem classification with more classes.

\subsubsection{Classification Accuracy v.s. $N$.}\label{sec:acc:vs:N}
Next, we study the impact of $N$ on the multi-class classification accuracy. We use datasets Pendigits, Letters, Sectors, Aloi with 10 classes, 26 classes, 105 classes, 1000 classes respectively  as showcase. 
  In order to a obtain meaningful analysis, we choose a suitable classifier for different datasets. In particular, we apply the CART to datasets Pendigits, Letters and Aloi and linear SVM to Sectors. One observes from Figure
\ref{fig:acc_Impact_N} that the
$N$-ary ECOC achieves competitive  prediction performance when $3\leq  N \leq 10$. However, given  sufficient base learners, the classification error starts increasing  when $N$ is large (e.g. $N > 4$ for Pendigits, $N > 5$ for Letters and $N > 8$ for Sector). This is because the base tasks are more challenging to solve when $N$ is large and it indicates the influence of $\bar{B}$ outweighs that of $\rho$.
 Furthermore, one observes that the performance curves in Figure \ref{fig:ratio_ham} and \ref{fig:acc_Pen_N} roughly correlate to each other. Hence, one can estimate the trend in the empirical error using the ratio $\bar{B}/\rho$. This verifies the validity of the generalized error bound in Theorem \ref{thm:N_ary_error}.  To investigate the choice of $N$ on  multi-class classification more comprehensively, we further conduct  experiments on the other datasets.  The results of datasets Pendigits, Letters,  Sectors and Aloi  are summarized in Figure \ref{fig:acc_Pen_N}, \ref{fig:acc_Letter_N}, Figure \ref{fig:acc_Sector_N} and Figure \ref{fig:acc_Aloi_N}, respectively. For the rest of the datasets, we have the similar observations. In general, smaller values of $N$ ($N \in [3,10]$) usually lead to reasonably competitive performance.   In other words, the complexity of base learners for $N$-ary codes does not need to  significantly increase above 3  for the performance to be better than existing binary or ternary coding approaches.


\subsubsection{Classification Accuracy v.s. $N_L$.}
From Figure \ref{fig:acc_Impact_L},  we observe that high accuracy  can be achieved with a small number of base learners. Another important observation is that given fewer base learners, it is better to choose a large value of $N$ rather than a small $N$. This may be due to the fact that a larger $N$ leads to stronger discrimination among codes as well as base learners. However, neither a large nor small $N$ can reach optimal results given a sufficiently large $N_L$.


\begin{figure*}[htp!]
\centering
\subfigure[  Pendigits.] {\label{fig:acc_Pen_N}\includegraphics[trim = 1.5cm 6.5cm
2.1cm 3cm, clip, width=0.25\textwidth]{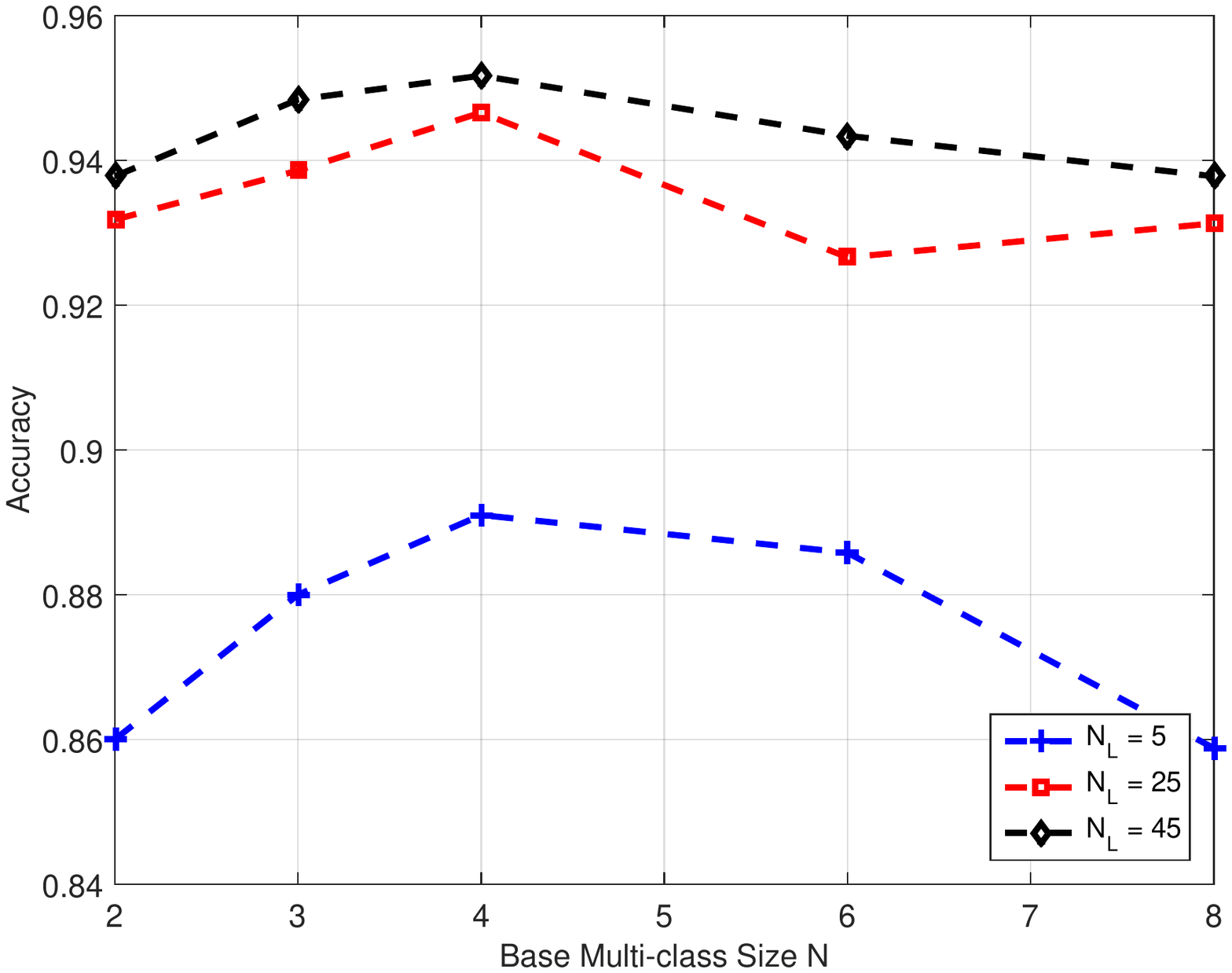}}
\subfigure[ Letters.] {\label{fig:acc_Letter_N}\includegraphics[trim = 1.5cm 6.5cm
2.1cm 3cm, clip, width=0.25\textwidth]{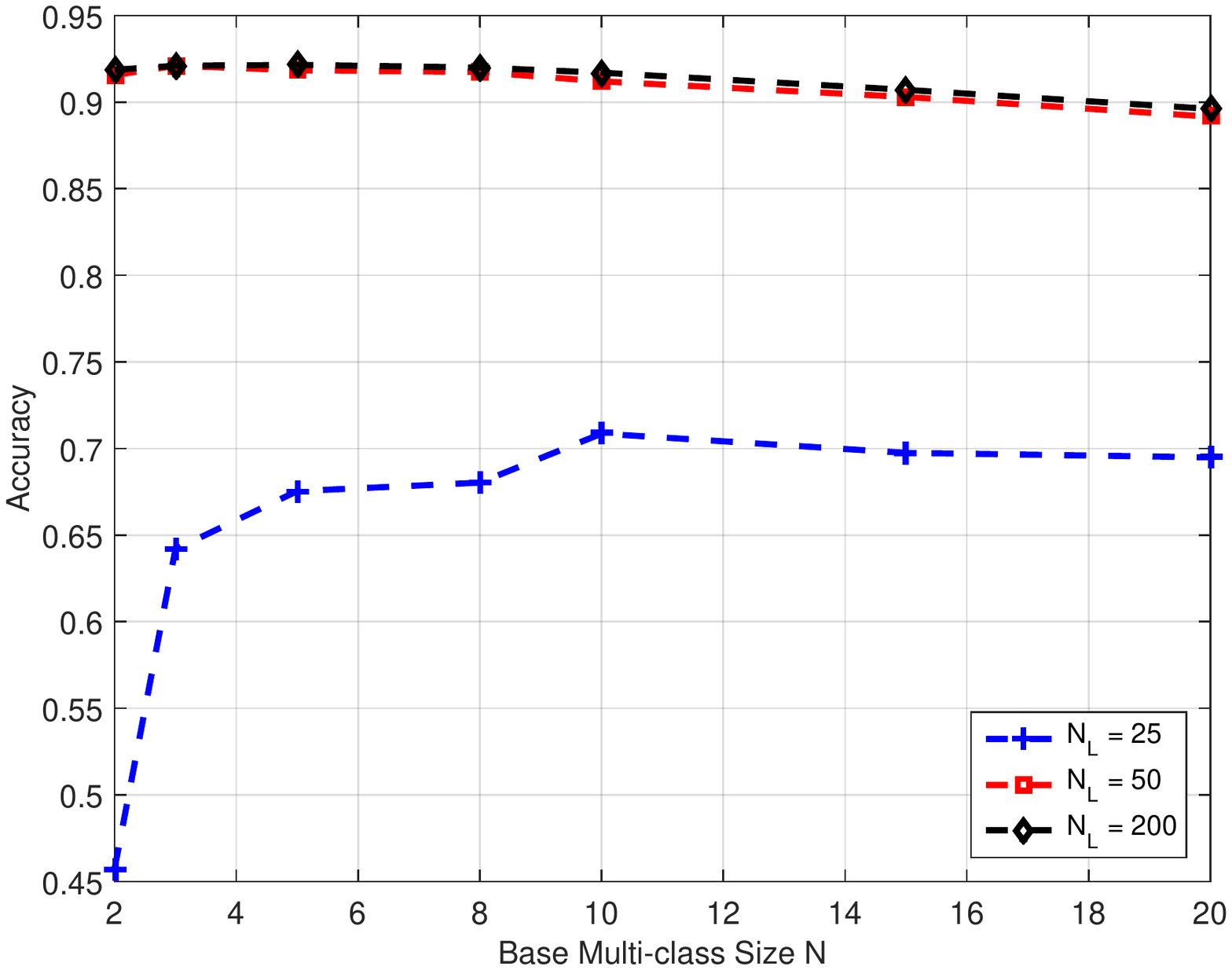}}
 \subfigure[ Sector.]{\label{fig:acc_Sector_N}\includegraphics[trim = 1.5cm 6.5cm
2.1cm 3cm, clip, width=0.25\textwidth]{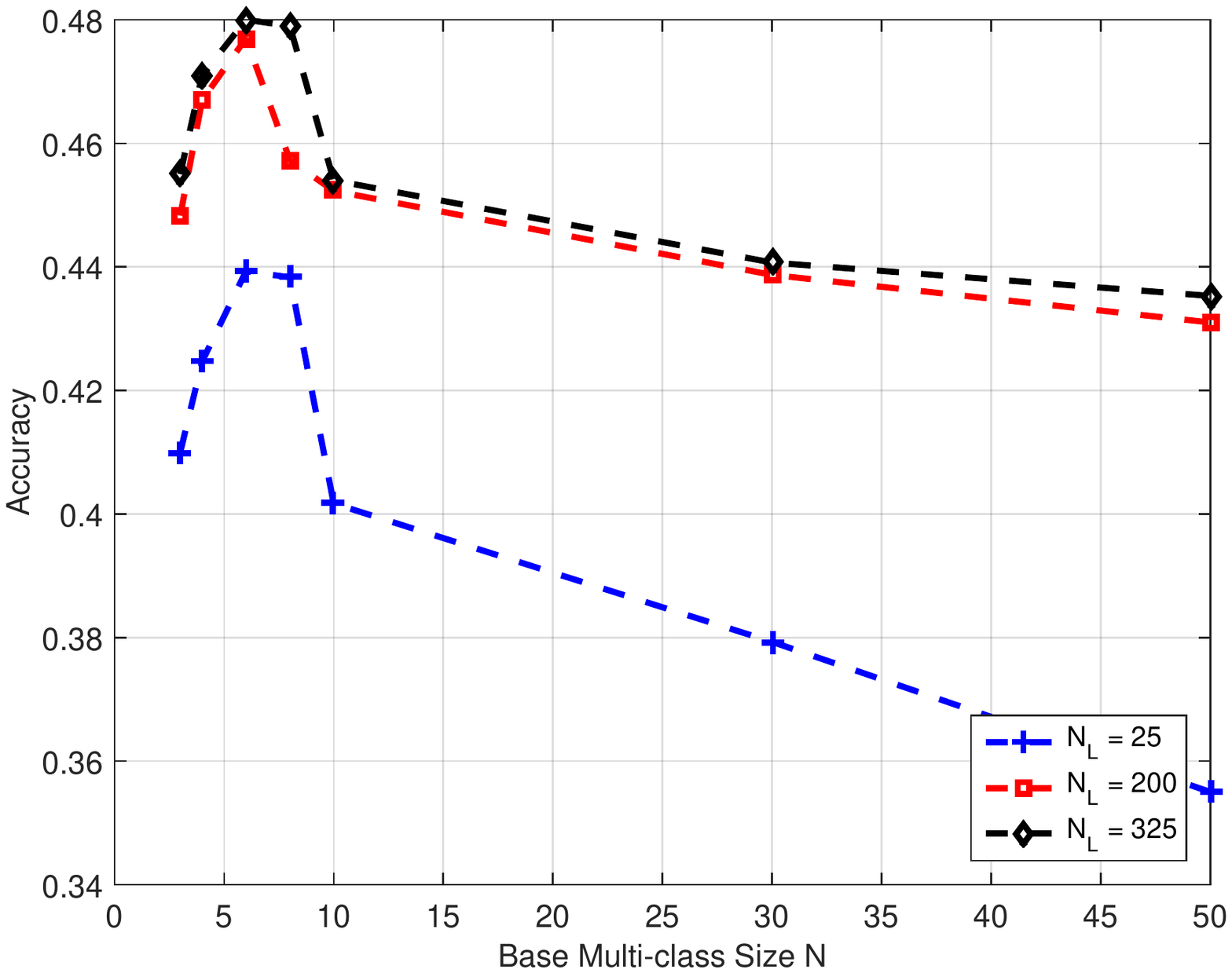}}
\subfigure[ Aloi.] {\label{fig:acc_Aloi_N}\includegraphics[trim = 1.5cm 6.5cm
2.1cm 3cm, clip, width=0.25\textwidth]{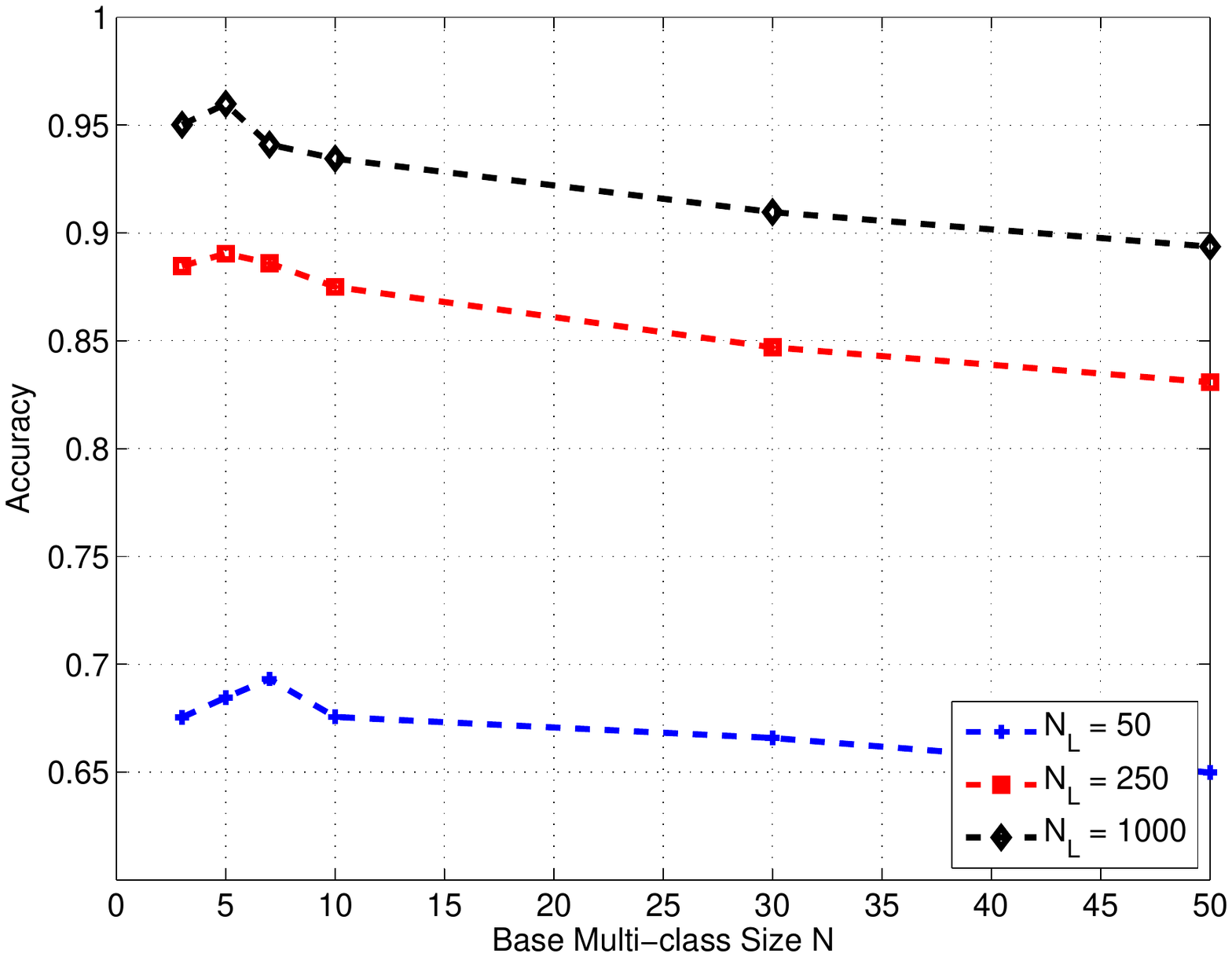}}
\caption{ Accuracy vs. $N$.} \label{fig:acc_Impact_N}
\end{figure*}
\begin{figure*}[htp!]
\vspace{-1.5 cm}
\centering
\subfigure[Pendigits.] {\label{fig:acc_Pen_L}\includegraphics[trim = 1.5cm 6.5cm
2.1cm 3cm, clip, width=0.25\textwidth]{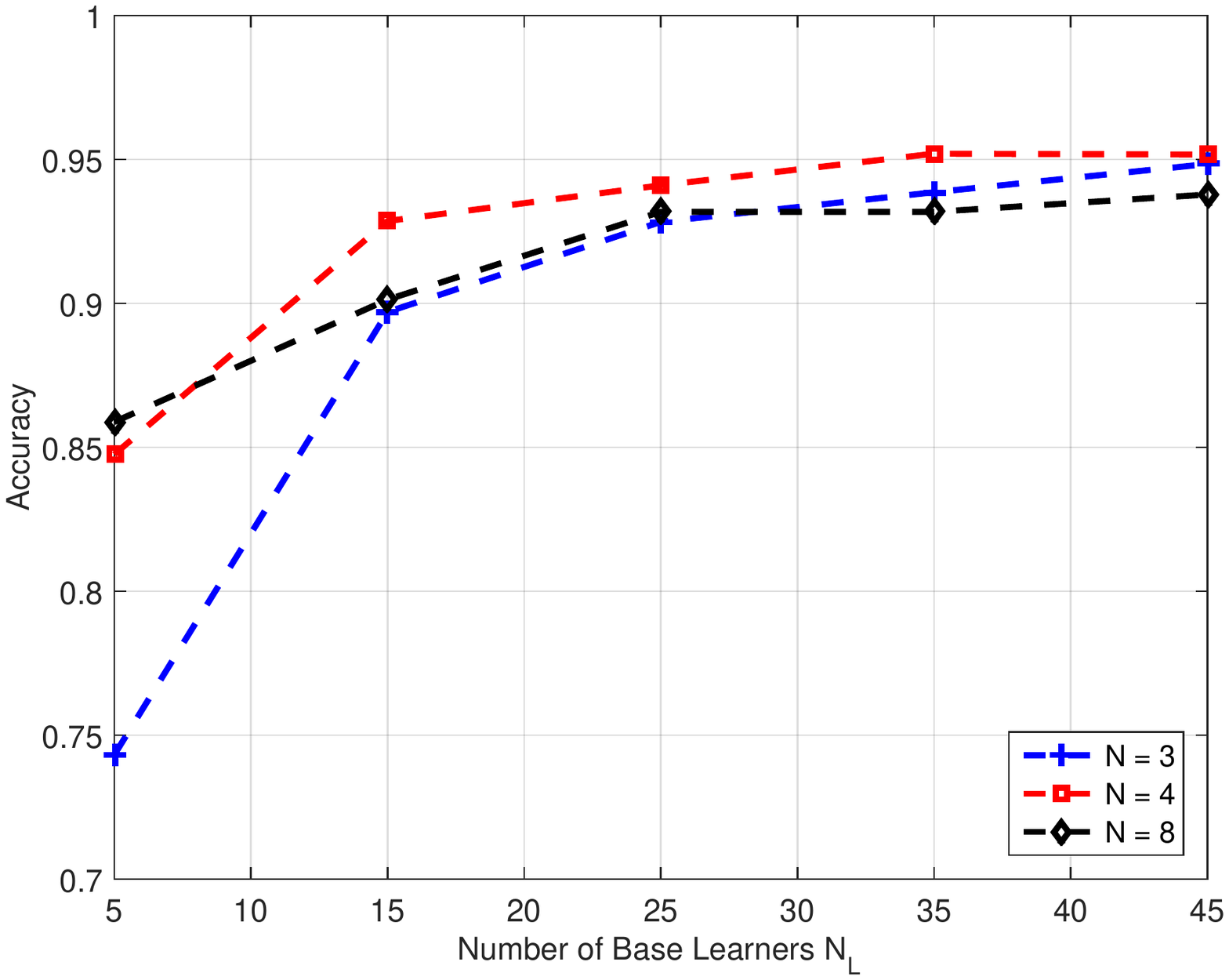}}
\subfigure[Letters.] {\label{fig:acc_Letter_L}\includegraphics[trim = 1.5cm 6.5cm
2.1cm 3cm, clip, width=0.25\textwidth]{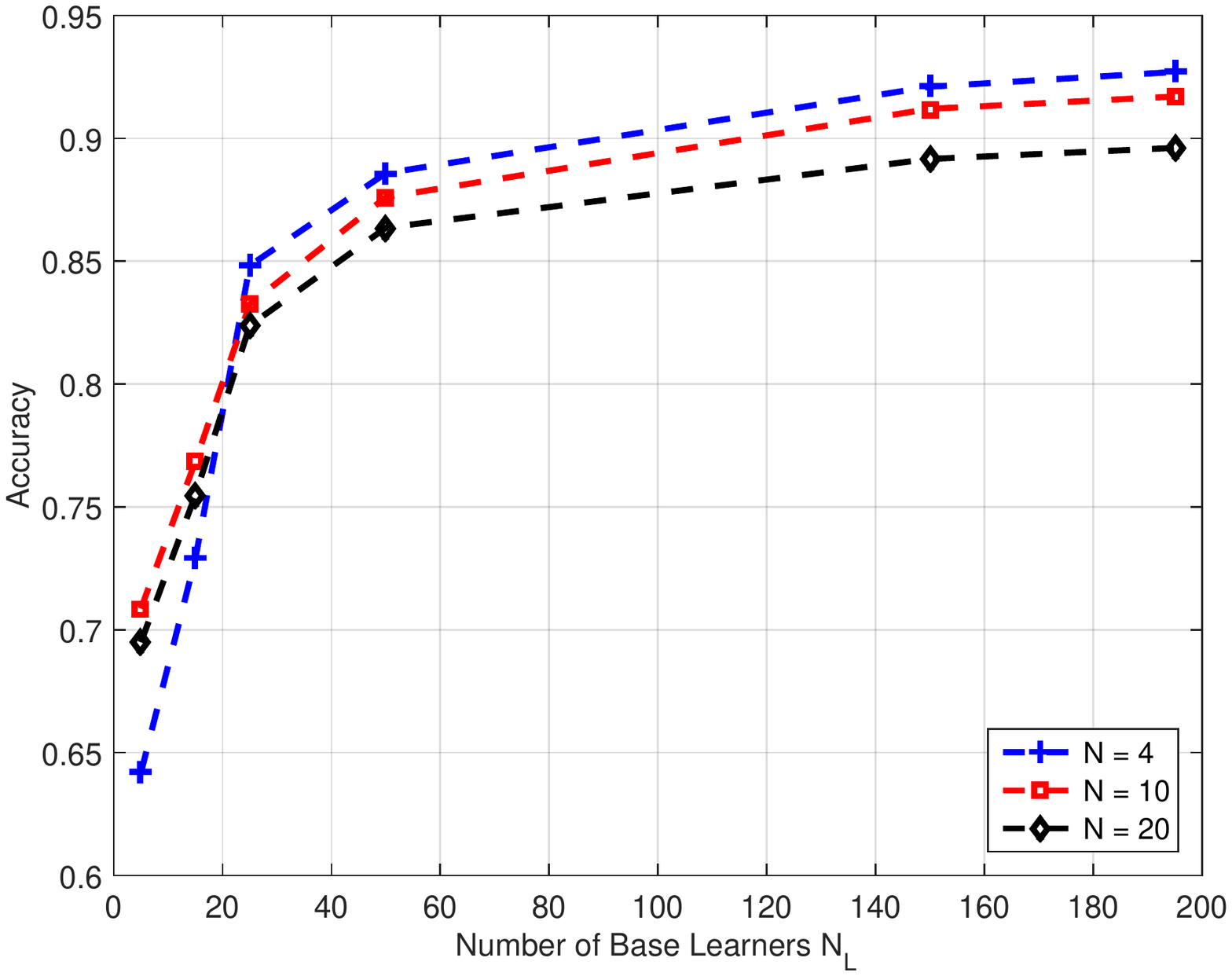}}
 \subfigure[ Sector.]{\label{fig:acc_Sector_L}\includegraphics[trim = 1.5cm 6.5cm
2.1cm 3cm, clip, width=0.25\textwidth]{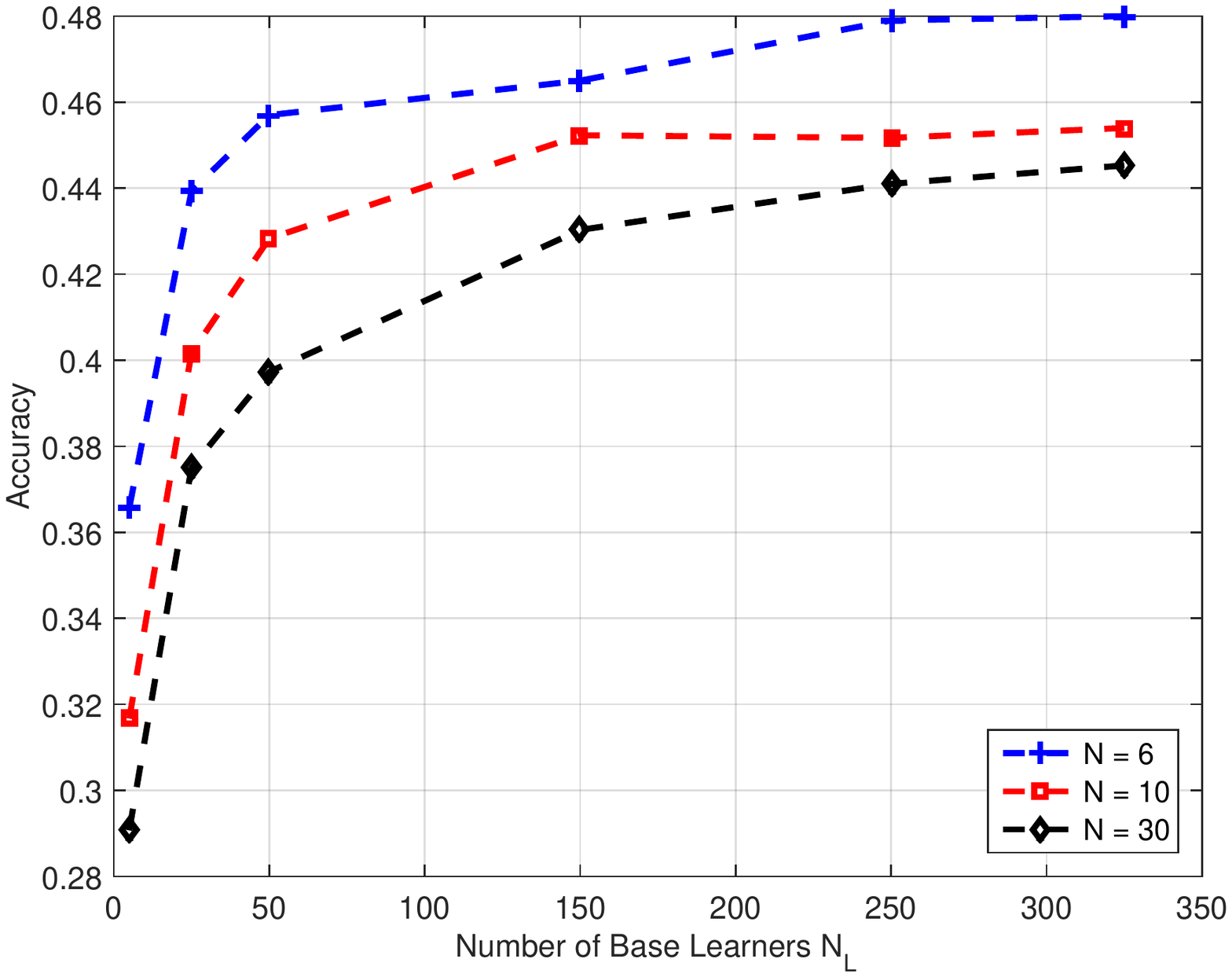}}
 \subfigure[ Aloi.]{\label{fig:acc_Aloi_L}\includegraphics[trim = 1.5cm 6.5cm
2.1cm 3cm, clip, width=0.25\textwidth]{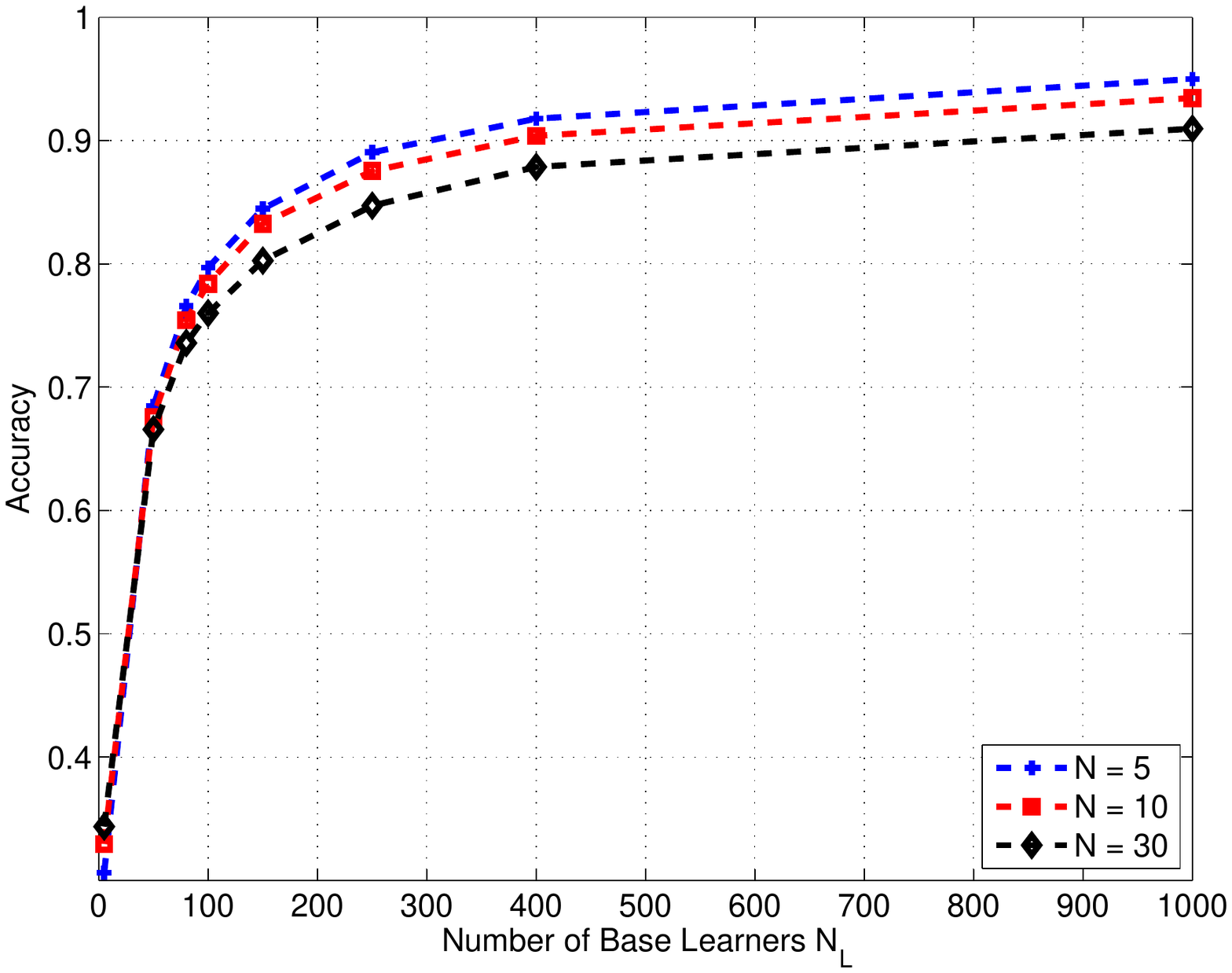}}
\caption{ Accuracy vs. $N_L$.} \label{fig:acc_Impact_L}
\end{figure*}

\subsection{Comparison to State-of-the-art ECOC Strategies.} 
We compare our proposed $N$-ary ECOC scheme to other state-of-the-art ECOC schemes with different base classifiers including decision tree (DT)  \cite{cart84-2} and support vector machine (SVM) \cite{Chang:2011:LLS:1961189.1961199}. \footnote{Note that coding design is independent from base learners. It is fair to fix the base learners for ECOC coding comparison.} The two binary classifiers can be easily extended to a multi-class setting. In particular, we use the multi-class SVM (M-SVM) \cite{Crammer:2002:AIM:944790.944813} implemented with the MSVMpack \cite{MSVMpack}. In addition to the multi-class extension of the two classifiers, we also compare $N$-ary ECOC to OVO, OVA, random ECOC, ECOCONE and DECOC with the two binary classifiers. For random ECOC and $N$-ary ECOC, we report the best  results with  $N_L \leq N_C(N_C-1)/2$, which is sufficient for conventional random ECOC to reach optimal performance \cite{Allwein:2001:RMB:944733.944737,4668347}. But for Aloi dataset with 1000 classes, we only report the results for all the ECOC based methods within $N_L=1000$ due to its large class size.

\subsubsection{Comparison to state-of-the-art ECOC  with SVM Classifiers}
The classification accuracy of different ECOC coding schemes as well as proposed $N$-ary ECOC with SVM classifiers are presented in
 Table \ref{tab:summary_datasets_SVM}.  We
 observe that OVO  has  the best and most stable performance on most datasets of all the encoding schemes except for $N$-ary ECOC. This is because  all the information between any two classes is used during classification and the OVO coding strategy has no redundancy among different base classifiers.
However, it sacrifices  efficiency for
better performance. It is very expensive for both training and testing when there are many classes in the datasets such as the Auslan, Sector and Aloi. Especially, for Aloi with 1000 classes, it is often not viable to calculate the entire
OVO classifications in the real-world application as it would require 499 500
 base
learners in the pool of possible combinations for training and
testing.
The performance of OVA is unstable. For the datasets News20 and Sector, OVA  even significantly outperforms OVO. However, the performances of OVA on the datasets Vowel, Letters, and Glass are much worse than other encoding schemes.  Note that ECOCONE is initialized with OVA. Its
performance largely depends on the performance of the initial coding
matrix. When OVA performs poorly, ECOCONE  also performs poorly.  Another
problem-dependent coding approach DECOC sets the fixed length of
ECOC codes to $N_C-1$. Although ECOCONE and DECOC are problem-dependent coding strategies, their performance is not satisfactory in general. We observe that M-SVM achieves better results than ECOC because it considers relationship among classes. However, the training complexity of M-SVM is very high. In contrast to M-SVM,  ECOC can be parallelized due to independencies of base tasks. $N$-ary ECOC combines the advantages of both M-SVM and ECOC to achieve better performance.
\begin{table*}
\small
\caption{Classification accuracy and standard deviation obtained by SVM classifiers for UCI datasets.}\
\begin{tabular}{c| c c c c c c c}
  \hline
    Dataset   &OVO              &OVA                   &ECOC                 & DECOC              & ECOCONE            & M-SVM           & Nary-ECOC  \\\hline
    Pendigits  & \textbf{93.71 $\pm$ 2.03} &$81.75 \pm 1.07$   & $87.64 \pm 1.08$       & $72.21 \pm 0.88$  & $88.84 \pm 1.57$  &$89.85 \pm 1.65$  & $93.22 \pm 1.71$\\
    Vowel      & \textbf{48.67 $\pm$ 2.63} &$30.30 \pm 1.42$   & $34.28 \pm 1.55$       & $32.42 \pm 1.49$  & $31.67 \pm 1.75$ & $41.67 \pm 1.42$  & $39.96 \pm 2.07$\\
    News20     & $68.36 \pm 1.70$ &$72.65 \pm 2.24$   & $70.61 \pm 1.67$       & $66.84 \pm 1.90$   & $70.78 \pm 1.22$ & 70.78 $\pm$ 0.85& \textbf{72.79 $\pm$ 1.08} \\
    Letters    & $81.85 \pm 1.26$ &$66.93 \pm 1.37$   & $77.78 \pm 1.26$      & $68.58 \pm 2.35$   & $70.56 \pm 1.58$ & 80.16 $\pm$ 2.06  & \textbf{82.27 $\pm$ 1.09}\\
    Auslan     & $89.94 \pm 2.23$ &$41.12 \pm 1.28$   & $83.15 \pm 0.84$      & $50.86 \pm 2.20$   & $46.24 \pm 1.60$ & $90.06 \pm 1.58$ & \textbf{91.57 $\pm$ 0.96}\\
    Sector     & $85.06 \pm 1.54$ &$89.06 \pm 1.35$    & $88.01 \pm 1.47$     & $88.47 \pm 1.05$   & $89.89 \pm 0.88$  & $88.75 \pm 1.55$  &\textbf{ 91.05 $\pm$ 1.32}\\
    Aloi       & 91.49 $\pm$ 1.68 & $84.26 \pm 2.53$   & $85.48 \pm 1.17$ & $82.47 \pm 1.05$   & $85.09 \pm 0.88$  & $86.69 \pm 1.02$  & \textbf{92.77 $\pm$ 1.86}\\
    Glass      & \textbf{61.84 $\pm$ 2.24} &$55.00 \pm 2.23$   & $56.00 \pm 0.89$    & $52.21 \pm 1.21$   & $56.84 \pm 1.12$  & $58.84 \pm 2.42$  & $56.84 \pm 1.56$\\
    Satimage   & $85.69 \pm 2.57$ &$83.11 \pm 1.86$    & $81.19 \pm 1.62$    & $82.71 \pm 1.44$   & $82.56 \pm 1.61$  & $83.28 \pm 0.36$  &\textbf{ 86.50 $\pm$ 0.93}\\
    Usps       & $94.30 \pm 1.14$ &$92.37 \pm 1.65$    & $90.76 \pm 1.45$    & $83.11 \pm 1.72$   & $78.28 \pm 1.67$  & $92.16 \pm 1.27$  & \textbf{96.15 $\pm$ 2.47}\\
    Segment    & $92.30 \pm 1.46$ &$92.00\pm 1.89$    & $87.80 \pm 1.61$     & $90.78 \pm 1.78$  & $78.28 \pm 1.23$  & $89.96 \pm 1.49$  & \textbf{93.60 $\pm$ 1.53}\\
    Mean Rank  & 2.8              & 4.9               &5.1                   &5.6                 &5.0                 & 3.0                 &\textbf{1.6}
\end{tabular}\label{tab:summary_datasets_SVM}
\centering \caption{Classification accuracy and standard deviation obtained by CART classifiers for UCI datasets.}
\small
\begin{tabular}{c| c c c c c c c}
  \hline
    Dataset    &OVO               &OVA                   &ECOC                 & DECOC             & ECOCONE            & M-CART           & Nary-ECOC  \\\hline
    Pendigits  & $93.84 \pm 2.33$ &$78.12 \pm 1.24$   & $83.54 \pm 1.30$       & $81.45 \pm 1.38$  & $80.53 \pm 1.12$  &$87.64 \pm 1.12$  & \textbf{95.84 $\pm$ 1.08}\\
    Vowel      & $44.45 \pm 1.74$ &$33.57 \pm 2.31$   & $43.65 \pm 2.35$       & $35.73 \pm 1.16$  & $36.15 \pm 2.25$ & $45.45 \pm 2.25$  & \textbf{48.50 $\pm$ 1.20}\\
    News20     & $50.60 \pm 1.17$ &$45.23 \pm 1.15$   & $51.29 \pm 1.26$      & $44.25 \pm 2.29$   & $50.28 \pm 1.37$ & $50.83 \pm 1.37$  & \textbf{53.71 $\pm$ 1.70}\\
    Letters    & $81.56 \pm 1.30$ &$74.69 \pm 1.50$   & $89.75 \pm 1.55$      & $78.56 \pm 1.05$   & $77.10 \pm 1.26$ & $77.35 \pm 1.26$  & \textbf{92.15 $\pm$ 1.92}\\
    Auslan    & $79.84 \pm 2.23$ &$72.86  \pm 2.04$   & $83.15 \pm 2.84$     & $75.30 \pm 1.15$   & $75.28 \pm 2.36$   & $78.89 \pm 1.18$ &\textbf{85.17 $\pm$ 1.26}\\
    Sector     & $39.49 \pm 1.33$ &$41.89 \pm 1.26$   & $43.60 \pm 1.17$     & $44.47 \pm 2.35$   & $44.29 \pm 1.18$  & $45.89 \pm 2.15$  &\textbf{ 47.05 $\pm$ 1.27}\\
    Aloi     & $89.26 \pm 1.49$ &$72.10 \pm 2.60$   & $79.41 \pm 1.08$     & $75.33 \pm 2.13$   & $72.78 \pm 1.80$  & $73.00 \pm 2.07$  &\textbf{ 95.13 $\pm$ 1.89}\\
    Glass      & $52.84 \pm 1.15$ &$50.12 \pm 1.24$   & $54.65 \pm 1.35$    & $52.21 \pm 2.38$   & $53.02 \pm 2.12$   & \textbf{64.00 $\pm$ 2.12}  & 56.00 $\pm$ 1.14\\
    Satimage   & $85.70 \pm 1.27$ &$84.15 \pm 1.08$    & $85.86 \pm 2.75$    & $80.37 \pm 1.16$   & $84.28 \pm 2.36$  & $83.47 \pm 2.36$  & \textbf{86.47 $\pm$ 2.23}\\
    Usps       & $90.94 \pm 2.16$ &$80.89 \pm 1.47$    & $91.95 \pm 1.57$    & $80.25 \pm 1.19$   & $80.45 \pm 2.36$  & $83.54 \pm 1.16$  & \textbf{92.77 $\pm$ 1.15}\\
    Segment    & $93.68 \pm 1.25$ &$86.45 \pm 2.22$    & $96.44 \pm 2.52$     & $80.57 \pm 2.16$  & $88.78 \pm 1.36$  & $92.70 \pm 1.34$  & \textbf{97.10 $\pm$ 1.28}\\
    Mean Rank  & 3.6              & 6.3               &2.8                   &5.7                &5.1                 & 3.4                 &\textbf{1.1}
\end{tabular}\label{tab:summary_datasets_DT}
\end{table*}

\begin{figure}[htp!]
\centering
\subfigure[ Binary Code.] {\label{fig:confu_bi}\includegraphics[trim = 3cm 18cm
2.5cm 2.5cm, clip, width=0.525\textwidth]{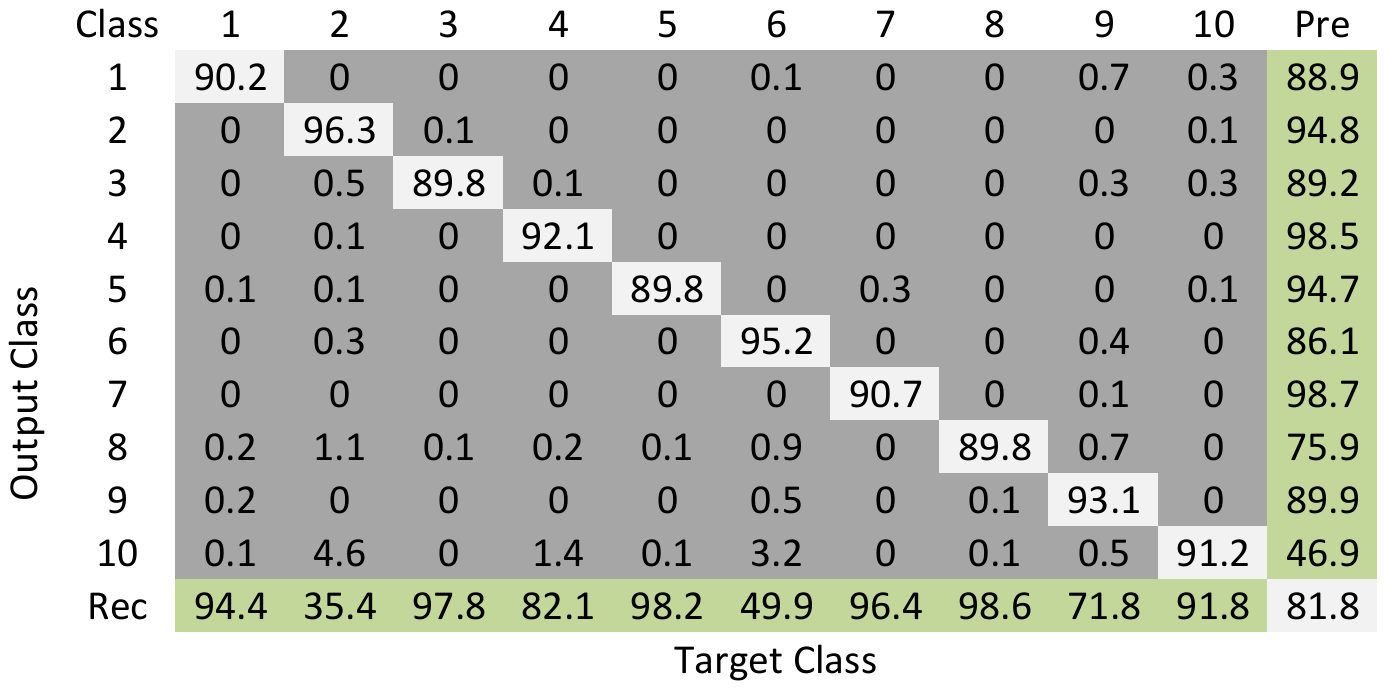}}\\
\subfigure[ Ternary Code.] {\label{fig:confu_ter}\includegraphics[trim = 3cm 18cm
2.5cm 2.5cm, clip, width=0.53\textwidth]{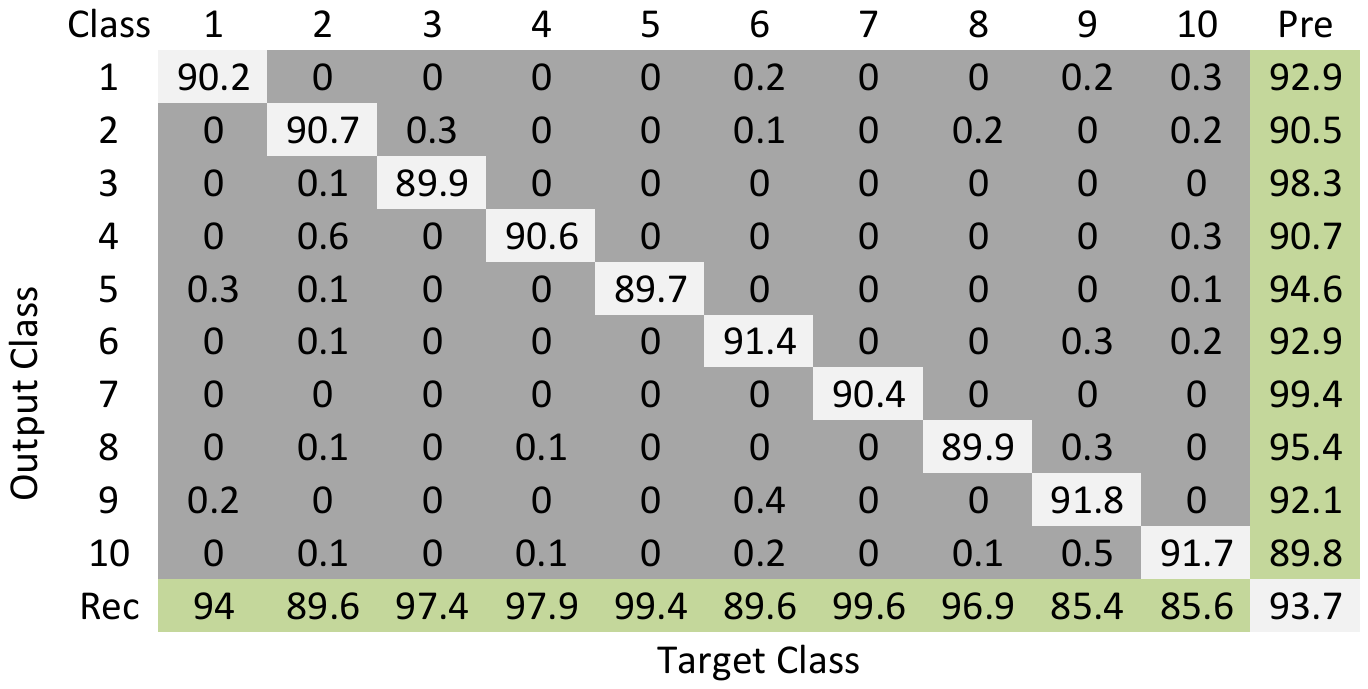}}\\
 \subfigure[ N-ary Code.]{\label{fig:confu_nary}\includegraphics[trim = 3.1cm 18cm
2.5cm 2.5cm, clip, width=0.525\textwidth]{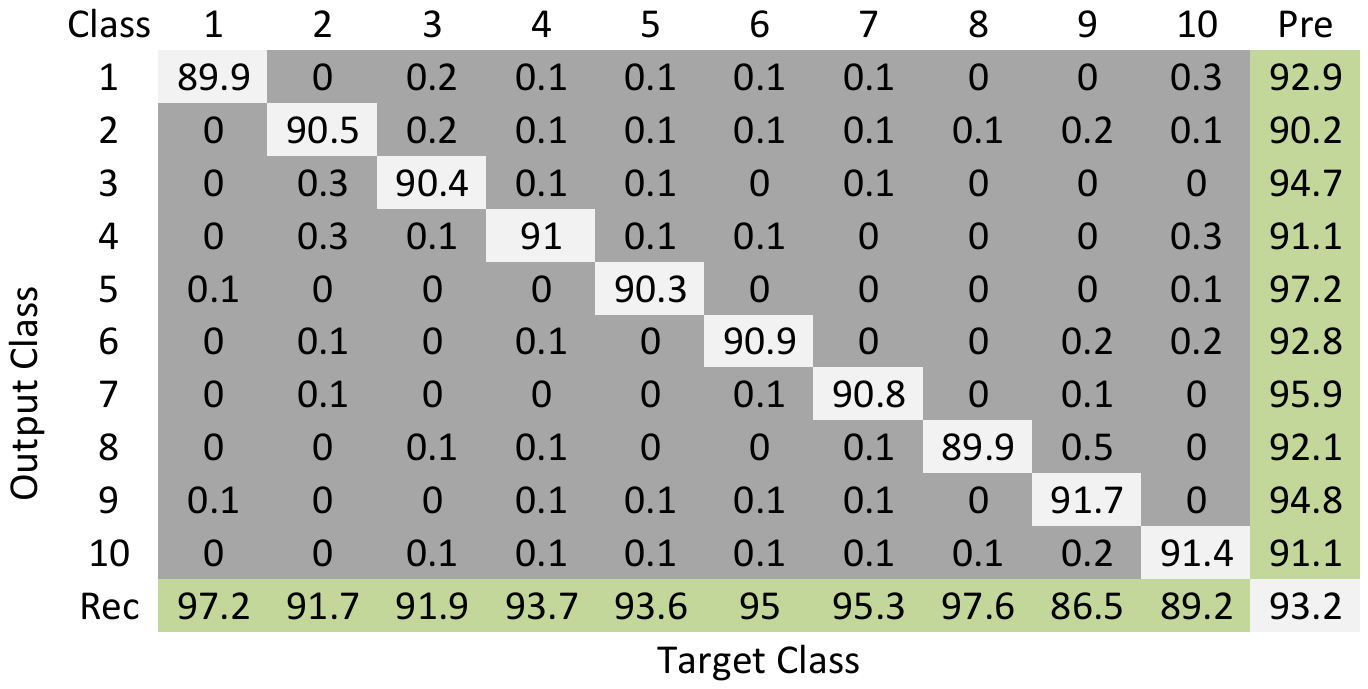}}
\caption{Confusion matrix on Pendigits: In confusion
matrix, the entry in the ith row and jth column is the percentage
of images from class i that are misidentified as class j. Average
classification rates for individual classes are listed along the diagonal. The last column and last row are  precision (Pre) and recall (Rec) respectively.  } \label{fig:confusion}
\end{figure}

\subsubsection{Comparison to State-of-the-art ECOC  with Decision Tree Classifiers}
Next, we compare $N$-ary ECOC with other state-of-the-art coding schemes using binary decision tree classifiers CART  \cite{cart84-2} as well as its multi-class extension M-CART. We implement it with the CART toolbox with a default setting and the results are reported in Table \ref{tab:summary_datasets_DT}. We observe that binary decision tree classifiers with traditional ECOC strategies are worse than the direct multi-class extension of the decision tree. The decision tree classifiers show  better performances than SVM on the Pendigits, Vowel, and Letters datasets. However, it shows very poor performances on high dimensional datasets such as News20 and Sector. This is due to the fact that high-dimensional features often lead to complex tree structure  construction. Nevertheless, $N$-ary ECOC still can significantly improve the performance on either traditional coding schemes with binary decision tree learner as well as the multi-class decision tree.

In summary, our proposed $N$-ary ECOC is superior to traditional ECOC encoding schemes and direct multi-class algorithms on  most tasks, and provides a flexible strategy to decompose many classes into many smaller multi-class problems, each of which can be independently solved by  either M-SVM or M-CART in parallelization.

\subsubsection{Discussion on Many class Situation}
From the experiments results, we observe that the $N$-ary ECOC shows significant  improvement on the Aloi dataset with 1000 classes over other existing coding schemes as well as  direct multi-class classification algorithms, especially  decision tree classifiers. For the binary or ternary ECOC codes, it is highly possible to assign the same codes to different classes. From the experimental results, we observe the minimum distance $\rho$ for binary and ternary coding are small or even tends to be 0.  In other words, the existing coding cannot help the classification algorithms to differentiate some classes.  In contrast, $N$-ary with $N_L = 1000$ and $N=5$,  the minimum distance $\rho$ is 741. Thus, it creates codes with larger margins for different classes, whichexplains the superior    On the other hand, the direct multi-class algorithms cannot work well when the class size is large. Furthermore, the computation cost for direct multi-class algorithms is in $O(N_C^3)$. When the class size $N_C$ is large, the algorithms  are expensive to train. On the contrary, random ECOC codes can be easily parallelized due to the independency among the subproblems.

\subsubsection{Discussion on Performance on Each Individual Class}
To understand the performances of different codes for each individual class,  we show the confusion matrix on the Pendigits dataset in Figure \ref{fig:confusion}.
First, we observe that binary code (i.e., OVA) has very poor performances on some classes in terms of recall or precision.
For example, recall on class 2, 6 and precision on class 10 are below 50\%. It can be explained by that as illustrated in Figure \ref{fig:intro:binary}, binary codes may lead  to nonseparable cases.
 Nevertheless, it achieves best classification results on the class 2, 4, 6 and class 9.
Compared to the binary code, ternary code (i.e., OVO) largely reduces the bias and improve precision and recall scores on most classes. What is more interesting, when the ternary code and $N$-ary code achieves comparable overall performances, $N$-ary achieves smaller maximal errors.
It may be benefited from simpler subtasks created by $N$-ary coding scheme, as shown in Figure \ref{fig:intro:nary}.


\section{Conclusions}\label{sec:conclude}
In this paper, we investigate whether one can relax binary and ternary code design to $N$-ary code design to achieve better classification performance. In particular, we present an $N$-ary coding scheme that decomposes the original multi-class problem into simpler multi-class subproblems. The advantages of such a coding scheme are as follows: (i) the ability to construct more discriminative codes and (ii) the flexibility for the user to select the best $N$ for ECOC-based classification. We derive a base classifier independent generalization error bound for the $N$-ary ECOC classification problem. We show empirically that the optimal $N$ (based on classification performance) lies in $[3, 10]$ with some tradeoff in computational cost.  Experimental results on  benchmark multi-class datasets show that the proposed coding scheme achieves superior prediction performance over the state-of-the-art coding methods. In the future, we will investigate a  more efficient realization of  $N$-ary coding scheme to improve the prediction speed.

\section{Acknowledgements}
This work  is done when Dr Joey Tianyi Zhou were at Nanyang Technological University supported by the Research Scholarship.
Dr. Ivor W. Tsang is grateful for the support from the ARC Future Fellowship FT130100746 and ARC grant LP150100671.
Dr. Shen-Shyang Ho acknowledges the support from MOE Tier 1 Grants RG41/12.
Dr. Klaus-Robert~M$\ddot{\textrm{u}}$ller gratefully acknowledges partial financial support by DFG, BMBF and BK21 from NRF (Korea).

\bibliography{TIT_arxiv}
\bibliographystyle{plain}
\begin{IEEEbiography}{Joey Tianyi Zhou}
 is a scientist  with the Computing Science Department, Institute of High Performance Computing (IHPC), Singapore.  Prior to join IHPC, he was a research fellow in Nanyang Technological University (NTU). He received the Ph.D. degree in computer
science from NTU, Singapore, in 2015.
\end{IEEEbiography}

\begin{IEEEbiography}{Ivor W. Tsang}
 is an Australian Future
Fellow and an Associate Professor with the Centre
for Quantum Computation and Intelligent Systems,
University of Technology at Sydney, Ultimo, NSW,
Australia.  He received the Ph.D. degree in computer
science from the Hong Kong University of
Science and Technology, Hong Kong, in 2007.
He was the Deputy Director of the Centre for
Computational Intelligence, Nanyang Technological
University, Singapore.  He has authored more than 100 research
papers in refereed international journals and conference
proceedings, including JMLR, TPAMI, TNN/TNNLS, NIPS, ICML,
UAI, AISTATS, SIGKDD, IJCAI, AAAI, ACL, ICCV, CVPR, and ICDM.
Dr. Tsang was a recipient of the 2008 Natural Science Award (Class II) from
the Ministry of Education, China, in 2009, which recognized his contributions
to kernel methods. He was also a recipient of the prestigious Australian
Research Council Future Fellowship for his research regarding Machine
Learning on Big Data in 2013. In addition, he received the prestigious IEEE
TRANSACTIONS ON NEURAL NETWORKS Outstanding 2004 Paper Award in
2006, the 2014 IEEE TRANSACTIONS ON MULTIMEDIA Prized Paper Award,
and a number of Best Paper Awards and Honors from reputable international
conferences, including the Best Student Paper Award at CVPR 2010, the Best
Paper Award at ICTAI 2011, and the Best Poster Award Honorable Mention
at ACML 2012. He was also a recipient of the Microsoft Fellowship in 2005
and the ECCV 2012 Outstanding Reviewer Award
\end{IEEEbiography}



\begin{IEEEbiography}{Shen-shyang Ho}
 received the BS degree in mathematics
and computational science from the National
University of Singapore in 1999, and the MS
and PhD degrees in computer science from George
Mason University in 2003 and 2007, respectively.
From August 2007 to May 2010, he was a NASA
Postdoctoral Program (NPP) fellow and then a postdoctoral
scholar at the California Institute of Technology
affiliated to the Jet Propulsion Laboratory
(JPL). From June 2010 to December 2012, he was
a researcher working on projects funded by NASA
at the University of Maryland Institute for Advanced Computer Studies
(UMIACS). Currently, he is a tenure-track assistant professor in the School of
Computer Engineering at the Nanyang Technological University. His research
interests include data mining, machine learning, pattern recognition in spatiotemporal/data
streaming settings, and privacy issues in data mining. He is
also currently involved in industrial research projects funded by BMW and
Rolls-Royces. He has given tutorials at AAAI, IJCNN, and ECML.
\end{IEEEbiography}

\begin{IEEEbiography}{Klaus-Robert~M$\ddot{\textrm{u}}$ller}
received the Diploma degree
in mathematical physics and the Ph.D. degree in computer
science from Technische Universit$\ddot{\textrm{a}}$t Karlsruhe,
Karlsruhe, Germany, in 1989 and 1992, respectively. \\
He has been a Professor of computer science with
Technische Universit$\ddot{\textrm{a}}$t Berlin, Berlin, Germany,
since 2006, and he is directing the Bernstein Focus
on Neurotechnology Berlin, Berlin. Since 2012,
he has been a Distinguished Professor with Korea
University, Seoul, Korea, within the WCU Program.
After a Post-Doctoral at GMD-FIRST, Berlin, he
was a Research Fellow with the University of Tokyo, Tokyo, Japan, from 1994
to 1995. In 1995, he built up the Intelligent Data Analysis Group, GMD-FIRST
(later Fraunhofer FIRST) and directed it until 2008. From 1999 to 2006, he was
a Professor with the University of Potsdam, Potsdam, Germany. His current
research interests include intelligent data analysis, machine learning, signal
processing, and brain computer interfaces.\\
Dr. Müller received the Olympus Prize by the German Pattern Recognition
Society, DAGM, in 1999, and the SEL Alcatel Communication Award in 2006.
In 2012, he was elected to be a member of the German National Academy of
Sciences Leopoldina.
\end{IEEEbiography}

\end{document}